\newtheorem{theorem}{Theorem}[section]
\newtheorem*{theorem*}{Theorem}
\newtheorem{lemma}[theorem]{Lemma}
\newtheorem{definition}[theorem]{Definition}
\newtheorem{corollary}[theorem]{Corollary}
\newtheorem{assumption}[theorem]{Assumption}
\newtheorem{fact}[theorem]{Fact}
\newtheorem{claim}[theorem]{Claim}
\newcommand{\pinv}[1]{{#1}^{+}}
\newcommand{\wh}{\widehat}
\newcommand{\wt}{\widetilde}
\newcommand{\eps}{\epsilon}
\newcommand{\R}{\mathbb{R}}
\newcommand{\norm}[1]{\left\lVert#1\right\rVert}
\renewcommand{\varepsilon}{\epsilon}
\renewcommand{\tilde}{\wt}
\renewcommand{\hat}{\wh}
\DeclareMathOperator*{\E}{{\bf {E}}}
\DeclareMathOperator{\rank}{rank}
\newcommand{\Yzero}{\mathbf{y}^0}
\newcommand{\Yi}{\mathbf{y}^i}
\newcommand{\Yone}{\mathbf{y}^1}
\newcommand{\Yonej}{\mathbf{y}^1_j}
\newcommand{\Yzeroj}{\mathbf{y}^0_j}
\newcommand{\e}{\mathbf{e}}
\newcommand{\betab}{\pmb{\beta}}
\newcommand{\betabzero}{\pmb{\beta}^0}
\newcommand{\etazero}{\pmb{\zeta}^0}
\newcommand{\betabone}{\pmb{\beta}^1}
\newcommand{\etaone}{\pmb{\zeta}^1}
\newcommand{\etai}{\pmb{\zeta}^i}
\newcommand{\Deltaone}{{\Delta}^1}
\newcommand{\size}[1]{\textup{size}(#1)}
\newcommand{\rmse}{\textsc{RMSE}\xspace}
\newcommand{\Sigmab}{{\mathbf \Sigma}}
\newcommand{\Xb}{{\mathbf X}}
\newcommand{\Yb}{{\mathbf y}}
\newcommand{\xb}{{\mathbf x}}
\newcommand{\pib}{\pmb{\pi}}
\newcommand{\Ub}{{\mathbf U}}
\newcommand{\Vb}{{\mathbf V}}
\newcommand{\Sb}{{\mathbf S}}
\newcommand{\Zb}{{\mathbf Z}}
\newcommand{\Wb}{{\mathbf W}}
\newcommand{\ITE}[1]{\textsc{ITE}(#1)}
\newcommand{\hITE}[1]{\hat{\textsc{ITE}}(#1)}
\newcommand{\samplingite}{\textsc{Sampling-ITE}\xspace}
\newcommand{\rgsw}{\textsc{Recursive-Covariate-Balancing}\xspace}
\newcommand{\gsw}{\textsc{Gram-Schmidt-Walk}\xspace}
\newcommand{\eqdef}{\mathbin{\stackrel{\rm def}{=}}}
\title{Sample Constrained Treatment Effect Estimation}
\author{Raghavendra Addanki\\ Adobe Research \\ \texttt{raddanki@adobe.com} \and David Arbour\\
Adobe Research\\
\texttt{darbour26@gmail.com}  \and
Tung Mai\\
Adobe Research\\
\texttt{tumai@adobe.com}
 \and
Cameron Musco\\
University of Massachusetts Amherst\\
\texttt{cmusco@cs.umass.edu}
 \and
Anup Rao\\
Adobe Research\\
\texttt{anuprao@adobe.com}}
\begin{document}

\maketitle

\begin{abstract}

Treatment effect estimation is a fundamental problem in causal inference. We focus on designing efficient randomized controlled trials, to accurately estimate the effect of some treatment on a population of $n$ individuals. In particular, we study \textit{sample-constrained treatment effect estimation}, where we must select a subset of $s \ll n$ individuals from the population to experiment on. This subset must be further partitioned into treatment and control groups. Algorithms for partitioning the entire population into treatment and control groups, or for choosing a single representative subset, have been well-studied. The key challenge in our setting is jointly choosing a representative subset and a partition for that set.

  We focus on both individual and average treatment effect estimation, under a linear effects model. We give provably efficient experimental designs and corresponding estimators, by identifying connections to discrepancy minimization and leverage-score-based sampling used in randomized numerical linear algebra. Our theoretical results obtain a smooth transition to known guarantees when $s$ equals the population size. We also empirically demonstrate the performance of our algorithms. \let\thefootnote\relax\footnotetext{Author ordering is alphabetical. A preliminary version of this work appeared in the proceedings of 36th Conference on Neural Information Processing Systems (NeurIPS 2022).}

\end{abstract}

\section{Introduction}
Experimentation has long been held as a gold standard for inferring causal effects since one can explicitly enforce independence between treatment assignment and other variables which influence the outcome of interest.  We consider the potential outcomes framework~\cite{neyman, rubin1980randomization}, where each individual is associated with a control and treatment value (also called the \textit{potential outcomes}) and based on the treatment assignment, we can observe only one of these values. Efficient designs of experimentation for estimating individual treatment effects which measure the difference between treatment and control values for each individual, and the average treatment effect which measures the average individual treatment effect has been well-studied~\cite{morgan2012rerandomization}. In the absence of assumptions on the functional form of the potential outcomes, the minimax optimal approach for conducting an experiment is to assign individuals to treatment or control completely at random, without consideration of baseline covariates (features)~\cite{kallusapriori}. 
However, by considering covariates for each individual, and using additional assumptions of smoothness, substantial gains can be made in terms of the variance of the treatment effect estimate via alternative assignment procedures. The most common approach attempts to  minimize imbalance, i.e., the difference between the baseline covariates in the treatment and control groups~\cite{pmlr-v130-arbour21a,  kallusapriori,  morgan2012rerandomization}.

While experimental designs that minimize imbalance increase the power of an experiment for a given pool of subjects, there are many practical applications where the experimenter wishes to minimize the total number of subjects who are placed into the experiment.
For example, in medicine, clinical trials may carry nontrivial risk to patients. Within industrial applications, experiments may carry substantial costs in terms of testing changes, which decrease the quality of the user experience, or have direct monetary costs. 

In this paper, we examine the problem of selecting a subset of $s$ individuals from a larger population and assigning treatments such that the estimated treatment effect has a small error.  We consider two different estimands: individual treatment effect (ITE) and average treatment effect (ATE).

A bit more formally, we represent the $d$-covariates of a population of $n$ individuals using $\Xb \in \mathbb{R}^{n \times d}$. We assume that the treatment and control values, denoted by $\Yone, \Yzero \in \mathbb{R}^n$, are  functions of the covariates, i.e., $\Yone = f(\Xb, \etaone)$ and $\Yzero = g(\Xb, \etazero)$ where $\etazero, \etaone \in \mathbb{R}^n$ are noise vectors. The ITE for the $i^{th}$ individual is $\Yone_i - \Yzero_i$ and ATE is the average of all the ITE values. We further assume a linear model, i.e., the functions $f, g$ are linear in $\Xb$ and $\etaone, \etazero$. The goal is to pick a subset of $s$ individuals and partition this subset into control and treatment groups. For an individual $i$ in the treatment group, we  measure $\Yone_i$, and for an individual $j$ in the control, we measure $\Yzero_j$. From this small set of measurements, we seek to estimate the ITE or ATE over the full population.

Without parametric assumptions, ITE estimation is not feasible~\cite{shalit2017estimating}. We focus on linear models in particular, since they are important in developing theory. E.g., in the literature on optimal designs in active learning, much of the foundational theory is built around linear models. Identifying estimators based on linearity assumptions is an active area of study in the causal inference literature~\cite{harshaw2019balancing,wager2016high}.

Our setup is similar to active learning~\cite{settles2009active}, where the goal is to minimize the number of individual labels that we access for solving linear regression or other downstream tasks. The key difference is that we must select both a subset of individuals, and for each $i$, can measure only one of two labels: $\Yone_i$ or $\Yzero_i$. 
In particular, ITE estimation can be thought of as solving two  \emph{simultaneous} active linear regression problems -- one for the treatment outcomes and one for the control outcomes. Thus, standard active learning-based approaches, such as \cite{chen2019active, cohen2013stability, mahoney2011randomized}, fall short. Even when $s$ equals the population size $n$, i.e., when active learning becomes trivial, our problem does not. We must still pick a partition of the full population into treatment and control groups. Overall, sample constrained treatment effect estimation by designing efficient randomized controlled trials has received little attention, compared to various approaches that use observational data, such as~\cite{jesson2021causal,  qin2021budgeted, sundin2019active}.

\subsection{Our Contributions} For ITE estimation, we propose an algorithm using \emph{leverage score sampling}~\cite{woodruff2014sketching}, which is a popular approach to subset selection for fast linear algebraic computation.
For ATE estimation, we employ a recursive application of a covariate balancing design~\cite{harshaw2019balancing}.
We provide a theoretical analysis in terms of root mean squared error (ITE) and deviation error (ATE).

Recall that we assume the treatment and control values are  linear functions of the covariates plus 
 Gaussian noise, i.e., $\Yone = \Xb \betabone + \etaone$ and $\Yzero = \Xb \betabzero + \etazero$ where $\etaone, \etazero \in \R^n$ have i.i.d. mean zero, variance $\sigma^2$ Gaussian entries, and $\betabone,\betabzero \in \R^d$ are coefficient vectors. 

\paragraph{ITE estimation.} For ITE estimation, we give a randomized algorithm that selects $\Theta(d\log d)$ individuals in expectation, using leverage scores, which measure the importance of an individual based on their covariates. Our algorithm obtains, with high probability, root mean squared error $O\Bigl(\sqrt{{\log d}/{n}} \cdot (\norm{\betab^1} + \norm{\betab^0} ) + \sigma \Bigl)$ (see Corollary~\ref{cor:ite}). We argue that this is optimal up to constants and a $\sqrt{\log d}$ factor, \emph{even for approaches that experiment on the full population}. 

The key challenge in achieving this bound is to extend leverage scores to our simultaneous linear regression setting, ensuring that we do not share samples across the treatment and control effect estimation problems. To do this, we introduce a \emph{smoothed} covariate matrix, whose leverage scores are bounded. This ensures that, when applying independent leverage score sampling, with high probability few individuals are randomly assigned to both control and treatment, and thus removing such individuals from one of the groups does not introduce too much error.

\paragraph{ATE estimation.} For ATE estimation we give a randomized algorithm that selects at most $s$ individuals for treatment/control assignment and obtains an error of $\tilde O\left({\sigma}/{\sqrt{s}} +  (\norm{\betabone} + \norm{\betabzero})/{s} \right)$, where $\tilde O(\cdot)$ hides logarithmic factors (see Theorem~\ref{thm:rgsw_ate}). The error decreases with increasing values of $s$ and when $s = n$, it matches state-of-the-art guarantees due to Harshaw et al.~\cite{harshaw2019balancing}.

Our algorithm for ATE estimation is based on \emph{covariate balancing}. This is a popular approach where one attempts to assign similar individuals to the treatment and control groups, to ensure that the observed effect is attributed to the administered treatment alone. Harshaw et al.~\cite{harshaw2019balancing} designed an algorithm by minimizing the discrepancy of an augmented covariate matrix, which achieves low ATE estimation error. To extend their approach to our setting, first, we need to select a subset of $s$ individuals that are representative of the entire population, and then balance the covariates. Uniform sampling or importance sampling techniques give high error here. Instead, we employ a  recursive strategy, which repeatedly partitions the individuals into two subsets by balancing covariates, and selects the smaller subset to recurse on, until we have selected at most $s$ individuals.

We observe that our techniques for ITE and ATE estimation should extend to the setting when the outcomes are non-linear functions of the covariates, which are linear in some higher-dimensional kernel space. This is immediate for our discrepancy minimization design for ATE, which only requires knowing the pairwise inner products of the covariate vectors. For ITE estimation, leverage score sampling for kernel ridge regression~\cite{alaoui2015fast} is most likely applicable. Extensions to broader classes of non-linear models are beyond the scope of this work, but they are an interesting future direction.

Finally, in Section~\ref{sec:experiments}, we provide an empirical evaluation of the performance of our ITE and ATE estimation  methods, comparing against uniform sampling and other baselines on several datasets. Our results suggest that our techniques can help reduce the costs associated with running a randomized controlled trials substantially using only a small fraction of the population.

\subsection{Other Related Work}  For ATE estimation, the most well-studied approaches to experiment design are covariate balancing and randomization. A variety of design techniques have been studied based on these approaches, such as blocking~\cite{greevy2004optimal}, matching~\cite{imai2008variance, stuart2010matching}, rerandomization~\cite{li2018asymptotic, morgan2012rerandomization}, and optimization~\cite{kallusapriori}. Using observational data, treatment effect estimation using covariate regression adjustment~\cite{lin2013agnostic} and various active learning-based sampling techniques have gained recent attention~\cite{jesson2021causal, nikolaev2013balance,  sundin2019active}. Compared to ATE, estimating ITE is significantly harder and has received attention only recently using machine learning methods~\cite{athey2016recursive, shalit2017estimating, wager2018estimation}. There has been a lot of recent work on efficient experimental designs to minimize experimental costs, in various domains, such as causal discovery~\cite{addanki2020efficient, addanki2021intervention, eberhardt2007causation, ghassami2018budgeted, kocaoglu2017cost, shanmugam2015learning}, multi-arm bandits~\cite{alieva2021robust,kazerouni2021best, nair2021budgeted}, and group testing~\cite{bondorf2020sublinear, chan2011non, du2000combinatorial}.

\section{Preliminaries}\label{sec:prelim}

\noindent \textbf{Notation.} We use bold capital letters, e.g., $\Xb$ to denote matrices and bold lowercase letters, e.g., $\Yb$ to denote vectors. We use $\Xb[i, :]$ and $\Xb[:, j]$ to denote the $i^{th}$ row and $j^{th}$ column of $\Xb$ respectively, which we always view as column vectors. The $i^{th}$ largest singular value of $\Xb$ is denoted by $\sigma_i(\Xb)$. For any vector $\xb$, the Euclidean norm or the $\ell_2$-norm is denoted by $\norm{\xb}$. 

For a population of $n$ individuals, we represent each with an integer in $[n]$ where we denote $[n] \eqdef \{1, 2, \cdots, n\}$. 
Each individual $j \in [n]$ is associated with a treatment and a control value, denoted $\Yone_j, \Yzero_j \in \mathbb{R}^+$, respectively. The vectors associated with all  $n$ treatment and control values are denoted $\Yone$ and $\Yzero$. Additionally, each individual is  associated with a $d$-dimensional covariate vector. Combined, they comprise the rows of the covariate matrix $\Xb \in \mathbb{R}^{n \times d}$. 

In this paper, we consider the finite population framework, where the potential outcomes of individuals are fixed and the randomness is only due to treatment assignment~\cite{ding2017bridging}. We make the {SUTVA} assumption, i.e., the treatment outcome value of any individual is independent of treatment assignments of others in the population~\cite{wager2020stats}.

\begin{assumption}[Linearity Assumption]\label{assump:linearity}
Under the linearity assumption, the treatment and control values are a linear function of the covariates. Formally, for some $\betabzero, \betabone \in \mathbb{R}^d$,
\[ \Yone = \Xb \betabone + \etaone \mbox{ and } \Yzero = \Xb \betabzero + \etazero, \]
where $\etaone, \etazero \in \mathbb{R}^n$ are noise vectors, with each coordinate drawn independently from the Gaussian distribution with zero mean and variance $\sigma^2$, i.e., $N(0, \sigma^2)$. 
We further assume that $\Xb$ is row-normalized, i.e.,  $\norm{\Xb[i, :]} \le 1 \ \forall i \in [n]$.
\end{assumption}

\begin{definition}[Individual Treatment Effect]\label{def:ITE}
Given a population of $n$ individuals, the individual treatment effect (ITE) of $j \in [n]$ is the difference between the treatment and control values:
\[ \ITE{j} \eqdef \Yonej - \Yzeroj.\]

\end{definition}

\begin{definition}[Average Treatment Effect]\label{def:ATE}
Given a population of $n$ individuals, the average treatment effect (ATE), denoted by $\tau$, is the average individual treatment effect:
\[ \tau \eqdef \frac{1}{n} \sum_{j \in [n]} \ITE{j} = \frac{1}{n} \sum_{j \in [n]}\Yone_j - \Yzero_j.\]

\end{definition}

\begin{definition}[Root Mean Squared Error]\label{def:mse}
For a set of estimated individual treatment effects, $\hITE{j}$ for $j \in [n]$, the root mean squared error (\rmse) is defined as:
\[ \rmse \eqdef \frac{1}{\sqrt{n}} \cdot  \norm{\hITE{j} - \ITE{j}}.\]
\end{definition}

\begin{definition}[Leverage Score]\label{def:levscore}
Given a matrix $\Xb \in \mathbb{R}^{n \times d}$, the leverage score of $j^{\text{th}}$ row $\Xb[j, :]$, denoted by $\ell_j(\Xb)$,  is defined as:
\[ \ell_j(\Xb) \eqdef  \Xb[j, :]^T \pinv{(\Xb^T \Xb)} \Xb[j,:],\]
where $\pinv{}$ denotes the Moore–Penrose pseudo-inverse. 
\end{definition}

\section{Individual Treatment Effect Estimation}\label{sec:ite}

In this section, we describe our algorithm for ITE estimation. The algorithm identifies a subset of the population to experiment on, using \emph{importance based sampling} techniques, that are well-studied in randomized numerical linear algebra~\cite{woodruff2014sketching}. Missing proof details in this section are presented in Appendix~\ref{app:ITE}.

\paragraph{Overview of our approach.} Under the linearity assumption (Assumption~\ref{assump:linearity}), we can reformulate the problem of estimating the ITE for every individual as simultaneously solving two linear regression instances: one for control and one for treatment, i.e., we regress $\Yzero, \Yone$ on $\Xb$. However, there are two challenges: 1) we would like to solve these regression problems using measurements from just a small subset of $s$ individuals and 2) we only have access to either the control or treatment measurement $\Yzero_j$ or $\Yone_j$ for any individual in this set.

To tackle the first challenge, we use a sampling technique based on the importance of each row in $\Xb$, captured via its leverage score (Defn.~\ref{def:levscore}). Intuitively, we want to select $s$ individuals (or equivalently rows) that capture the entire row space of $\Xb$ and use them to estimate the ITE of all other individuals. Leverage scores capture the importance of a row in making up the row space. E.g., if a row is orthogonal to all the other rows, it's leverage score will be the maximum value of $1$. 

Unfortunately, if we apply leverage score sampling independently to the regression problems for $\Yzero$ and $\Yone$,  rows with high leverage leverage scores may be sampled for both instances. This presents a problem, since we can only read at most one of $\Yzero_j$ or $\Yone_j$. To mitigate this issue, we construct a \emph{smoothed} matrix $\Xb^*$, which consists of $\Xb$ projected onto its singular vectors with high singular values. Intuitively, this dampens the effects of high leverage score `outlier' rows that don't contribute significantly to the spectrum of $\Xb$. Formally, we prove that the maximum leverage score of $\Xb^*$ is bounded, which let's us solve our two regression problems via independent sampling. There will be few repeated samples across our subsets, which introduce minimal error. 

\subsection{Leverage Score Sampling}

For some $\gamma \ge 0$, to be fixed later, we define a \emph{smoothed} matrix for $\Xb$, the projection onto singular vectors with high singular values, as follows:

\begin{definition}[{Smoothed} matrix]\label{def:smoothed} Given $\Xb \in \R^{n \times d}$ with singular value decomposition $ \Xb = \Ub \Sigmab \Vb^T$, let $\Gamma^*$ be the set of indices corresponding to singular values greater than $\sqrt{\gamma}$, i.e., $\Gamma^* \eqdef \{ i \mid \sigma_i(\Xb) \ge \sqrt{\gamma} \}$; we denote $d' \eqdef |\Gamma^*|$. Let $
\Sigmab^* = \Sigmab(\Gamma^*,\Gamma^*)$ denote the principal sub-matrix of $\Sigmab$ associated with these large singular values. Similarly, let $\Ub^* \in \mathbb{R}^{n \times d'}, \Vb^* \in \mathbb{R}^{d \times d'}$ be the associated column sub-matrices of $\Ub$ and $\Vb$. Then, we define:

\[ \Xb^* \eqdef \Ub^* \Sigmab^* \Vb^{*T}.\]
\end{definition}

\paragraph{Sampling Matrix.} Our algorithm will sample individuals, corresponding to rows of the {smoothed} matrix of $\Xb$, i.e., $\Xb^*$, independently -- the $i^{th}$ row is included in the sample with some probability $\pib_i$. Let the  set of rows sampled be denoted by $S$.

We can associate a sampling matrix $\Wb$ with $S$. The $j^{th}$ row of $\Wb$ is associated with the $j^{th}$ element in the set $S$ (under some fixed order). If the $j^{th}$ element in $S$ is the row for individual $i$ for some $i \in [n]$, then, $\Wb[j,:]$ is equal to $\e_i/\sqrt{ \pib_i}$. Here, $\e_i\in \mathbb{R}^n$ denotes the $i^{th}$ standard basis vector. In this way, $\Wb \Xb^* $ consists of the subset of rows sampled in $S$, reweighted by the inverse squareroot of their sampling probabilities, which is necessary to keep expectations correct in solving the linear regression.

\begin{algorithm}[!ht]
\caption{\textsc{Sampling-ITE}}
\label{alg:ite}
\begin{algorithmic}[1]
\small
\Statex \textbf{Input:} Smoothed covariates $\Xb^* \in \mathbb{R}^{n \times d}$, sampling probabilities $\pib \in [0,1]^n$.
\Statex \textbf{Output:} Estimates for $\ITE{j}$ for each individual $j \in [n]$.
\State Add each $j \in [n]$ to set $S^0$ independently, with prob. $\pib_j$. 
\State Add each $j \in [n]$ to set $S^1$ independently, with prob. $\pib_j$.
\State Construct sampling matrix $\Wb^0$ from $S^0$ using probabilities $\pib$.
\State Construct sampling matrix $\Wb^1$ from $S^1 \setminus S^0$ using probabilities $\pib(1-\pib)$.
\State Let
$\widetilde{\betab}^i = \arg \min_{\betab \in \mathbb{R}^d} \norm{\Wb^i \Xb^* \betab - \Wb^i \Yi}^2 \mbox{ for } i = 0, 1.$
\State For each $j \in[n]$, let $\hITE{j}$ be the $j^{th}$ entry of the vector $\Xb^*\widetilde{\betab}^1 - \Xb^*\widetilde{\betab}^0$ .
\State \Return $\hITE{j} \ \forall j \in [n]$.
\end{algorithmic}
\end{algorithm}
\setlength{\textfloatsep}{2pt}

\paragraph{Algorithm \samplingite.} 
We perform row sampling twice, with probabilities proportional to the leverage scores of $\Xb^*$, to construct two sets $S^0, S^1$. See the discussion below for the exact definition of the sampling probabilities $\pib_i$, which are proportional to the leverage scores of $\Xb^*$. These two sets are used to estimate the vectors $\Yzero$ and $\Yone$, respectively.
It is possible that a row gets included in both $S^0$ and $S^1$. In that case, we simply remove the row from $S^1$. As a result, $j^{th}$ row is included in $S^1$ with probability $\pib_j \cdot (1-\pib_j)$ for every $j \in [n]$. We construct sampling matrices $\Wb^0$ and $\Wb^1$ using  probabilities $\pib$ and $\pib (1-\pib)$ respectively. Finally, in Algorithm~\ref{alg:ite}, we solve the following linear regressions, for $i = 0, 1$ separately:
\[ \widetilde{\betab}^i = \arg \min_{\betab \in \mathbb{R}^d} \norm{\Wb^i \Xb^* \betab - \Wb^i \Yi}^2\]
Our estimate for each $\ITE{j}$, denoted by $\hITE{j}$ is set to $j^{th}$ entry of the vector $\Xb^*\widetilde{\betab}^1 - \Xb^*\widetilde{\betab}^0$.
Observe that by construction, $S^0 \cap S^1$ is empty. This ensures that we have access to only one of $\Yzero_j$ or $\Yone_j$ for any individual $j$ in solving the above two subsampled regression problems.

 We note that in Algorithm~\ref{alg:ite}, we could remove $j$ from one of $S^0$, $S^1$, or with equal probability from either of the two sets, and obtain the exact same guarantees.

\subsection{Theoretical Guarantees} 
First, we bound the error due to sampling. Critically, we show that the leverage scores of $\Xb^*$, and in turn the probabilities $\pib$, are bounded by $1/\gamma$. Thus, the sampling probabilities for $S^1,$ $\pib(1-\pib)$ are not too far from $\pib$ itself.

As we assume the row norms of $\Xb$ are bounded by $1$, the row norms of $\Xb^*$ are also bounded. Thus, there can be no rows in $\Xb^*$ that are nearly orthogonal to all other rows -- i.e., there can be no rows with very high leverage scores. Such rows  would lead to small singular values. However, we know that the smallest singular value of $\Xb^*$ is at least $\sqrt{\gamma}$. In particular, we prove:

\begin{claim}\label{cl:lev_ub}
$\ell_j(\Xb^*) \le 1/\gamma$, for all $j \in [n]$.
\end{claim}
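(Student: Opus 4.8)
The plan is to reduce the leverage score of $\Xb^*$ to a squared row norm of an orthonormal factor, and then play the retained singular values (all at least $\sqrt{\gamma}$) against the bounded row norms of $\Xb$.

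First I would recall the standard fact that leverage scores equal squared row norms of an orthonormal basis for the column space. Writing $\Xb^* = \Ub^* \Sigmab^* \Vb^{*T}$ as in Definition~\ref{def:smoothed}, and using $\Ub^{*T}\Ub^* = I_{d'}$, we have $(\Xb^*)^T \Xb^* = \Vb^* \Sigmab^{*2} \Vb^{*T}$, whose pseudoinverse is $\Vb^* \Sigmab^{*-2} \Vb^{*T}$. Since the $j$th row of $\Xb^*$ equals (as a column vector) $\Vb^* \Sigmab^* (\Ub^*[j,:])^T$, substituting into Definition~\ref{def:levscore} and repeatedly using $\Vb^{*T}\Vb^* = I_{d'}$ collapses the $\Sigmab^*$ and $\Sigmab^{*-2}$ factors, yielding $\ell_j(\Xb^*) = \norm{\Ub^*[j,:]}^2$, the squared norm of the $j$th row of the orthonormal matrix $\Ub^*$.

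Next I would connect this quantity to the genuine row norm of $\Xb^*$. Because $\Vb^*$ has orthonormal columns, $\norm{\Xb^*[j,:]} = \norm{\Ub^*[j,:]\,\Sigmab^*}$, and since every retained singular value satisfies $\sigma_i(\Xb) \ge \sqrt{\gamma}$ for $i \in \Gamma^*$, I get $\norm{\Ub^*[j,:]\,\Sigmab^*}^2 \ge \gamma \norm{\Ub^*[j,:]}^2 = \gamma\,\ell_j(\Xb^*)$. Rearranging gives $\ell_j(\Xb^*) \le \norm{\Xb^*[j,:]}^2/\gamma$, so it suffices to show $\norm{\Xb^*[j,:]} \le 1$.

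For this last step I would observe that $\Xb^* = \Xb\,\Vb^*\Vb^{*T}$: right-multiplying the SVD $\Xb = \Ub\Sigmab\Vb^T$ by the projector $\Vb^*\Vb^{*T}$ simply retains the large singular directions and reproduces $\Ub^*\Sigmab^*\Vb^{*T}$. Hence $\Xb^*[j,:] = \Vb^*\Vb^{*T}\Xb[j,:]$ is the image of the $j$th row of $\Xb$ under an orthogonal projection, which is non-expansive, so $\norm{\Xb^*[j,:]} \le \norm{\Xb[j,:]} \le 1$ by the row-normalization in Assumption~\ref{assump:linearity}. Combining the two displayed bounds yields $\ell_j(\Xb^*) \le 1/\gamma$. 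The one point that needs genuine care — and the main obstacle — is this final inequality: smoothing must not inflate row norms, which is exactly why it matters that $\Xb^*$ is an orthogonal projection of $\Xb$ rather than some rescaling; everything else is routine SVD bookkeeping.
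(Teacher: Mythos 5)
Your proof is correct and takes essentially the same route as the paper's: both reduce to the identity $\ell_j(\Xb^*) = \norm{\Ub^*[j,:]}^2$, use the retained singular values to get $\gamma\,\ell_j(\Xb^*) \le \norm{\Xb^*[j,:]}^2$, and finish with $\norm{\Xb^*[j,:]} \le \norm{\Xb[j,:]} \le 1$. The only difference is that you explicitly justify the row-norm comparison via the projection identity $\Xb^* = \Xb\,\Vb^*\Vb^{*T}$, a step the paper asserts without proof.
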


\paragraph{Setting $\pib$.} It is well known that if we sample rows of $\Xb^*$ with probabilities $\pib$  proportional to the leverage scores, we will obtain a $(1\pm \epsilon)$ relative error approximation for linear regression \cite{sarlos2006improved}. The result of Sarlos~\cite{sarlos2006improved} applies to sampling $s$ rows \emph{with replacement}, each equal to $j$ with probability $\pib_j/\norm{\pib}$. It is not hard to observe that it extends to the variant where each row is included in the sample independently with similar probability. Therefore, we have:
\begin{lemma}[Follows from~\cite{sarlos2006improved}]\label{lem:linregression2}
For $\Xb \in \mathbb{R}^{n \times d}$, $\Yb \in \mathbb{R}^{n}$,  
let $S \subseteq [n]$ include each $j \in [n]$ independently with probability $\pib_j$ satisfying $\pib_j \ge \min\left\{1, \ell_j(\Xb) \cdot c \cdot [\log (\mathrm{rank}(\Xb)) + \frac{1}{\delta \epsilon} ]\right\}$ for some large enough constant $c$. 
Let $\Wb \in \R^{|S| \times n}$ be a sampling matrix that includes row  $\mathbf{e}_j/\sqrt{\pib_j}$ if $j \in S$, where $\mathbf{e}_j \in \mathbb{R}^n$ is the $j^{th}$ standard basis vector. Let $\widetilde{\betab} = \arg \min_{\betab \in \R^d} \norm{\Wb \Xb \betab - \Wb \Yb}^2.$
Then, $\mathbb{E}[|S|] = \sum_{j=1}^n \pib_j$ and with probability $\ge 1-\delta$: 
\[\norm{\Xb \widetilde{\betab} - \Yb } \le (1+\epsilon) \cdot \min_{\betab} \norm{\Xb {\betab} - \Yb}.\]
If the $\pib_j$'s are within constants of the required bound, $\mathbb{E}[|S|] = O \left (d \log d + \frac{d}{\epsilon \delta} \right )$.
\end{lemma}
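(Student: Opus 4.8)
The plan is to reduce the relative-error regression guarantee to two standard structural conditions on the sampling matrix $\Wb$, and then to verify that each holds with the stated probability under \emph{independent} Bernoulli sampling (rather than the with-replacement sampling of~\cite{sarlos2006improved}). Let $\Ub \in \R^{n\times r}$ be an orthonormal basis for the column space of $\Xb$, where $r = \rank(\Xb)$; since $\ell_j(\Xb) = \norm{\Ub[j,:]}^2$, the hypothesis reads $\pib_j \gtrsim \norm{\Ub[j,:]}^2 \cdot c\,[\log r + \tfrac{1}{\delta\epsilon}]$. Let $\betab^\star = \arg\min_\betab\norm{\Xb\betab - \Yb}$ and $\mathbf{b}^\perp = \Yb - \Xb\betab^\star$, so that $\Ub^T\mathbf{b}^\perp = 0$. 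The two conditions are: (A) a constant-factor \emph{subspace embedding} $\norm{(\Wb\Ub)^T(\Wb\Ub) - \identity} \le 1/2$; and (B) \emph{approximate multiplication} $\norm{(\Wb\Ub)^T(\Wb\mathbf{b}^\perp)}^2 \le \tfrac{\epsilon}{2}\norm{\mathbf{b}^\perp}^2$.

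Granting (A) and (B), I would finish via the normal equations exactly as in~\cite{sarlos2006improved}. Writing $\Xb\widetilde{\betab} - \Yb = \Xb(\widetilde{\betab} - \betab^\star) - \mathbf{b}^\perp$ and using orthogonality, $\norm{\Xb\widetilde{\betab} - \Yb}^2 = \norm{\Xb(\widetilde{\betab} - \betab^\star)}^2 + \norm{\mathbf{b}^\perp}^2$, so it suffices to bound the first term by $O(\epsilon)\norm{\mathbf{b}^\perp}^2$. Optimality of $\widetilde{\betab}$ for the sampled problem gives $(\Wb\Ub)^T\Wb\Ub\, y = (\Wb\Ub)^T\Wb\mathbf{b}^\perp$ for the $y$ satisfying $\Xb(\widetilde{\betab} - \betab^\star) = \Ub y$; inverting the embedding via (A) and applying (B) yields $\norm{\Xb(\widetilde{\betab}-\betab^\star)}^2 = \norm{y}^2 \le \tfrac{1}{(1-1/2)^2}\cdot\tfrac{\epsilon}{2}\norm{\mathbf{b}^\perp}^2$, whence $\norm{\Xb\widetilde{\betab}-\Yb}^2 \le (1+2\epsilon)\norm{\mathbf{b}^\perp}^2$ and $\sqrt{1+2\epsilon}\le 1+\epsilon$ closes the bound.

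For condition (B) I would compute the second moment directly. Writing $(\Wb\Ub)^T\Wb\mathbf{b}^\perp = \sum_j \tfrac{X_j}{\pib_j}\Ub[j,:](\mathbf{b}^\perp)_j$ with $X_j = \mathbbm{1}[j\in S]$ independent Bernoulli, the mean is $\Ub^T\mathbf{b}^\perp = 0$, and independence of the $X_j$ makes the off-diagonal terms vanish, leaving $\E\norm{\,\cdot\,}^2 \le \sum_j \tfrac{\norm{\Ub[j,:]}^2}{\pib_j}(\mathbf{b}^\perp)_j^2 \le \tfrac{\delta\epsilon}{c}\norm{\mathbf{b}^\perp}^2$ by the lower bound $\pib_j \ge \ell_j\cdot c/(\delta\epsilon)$. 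Markov's inequality then gives (B) with failure probability at most $2\delta/c \le \delta/2$ for $c$ large. For condition (A), I would apply a matrix Chernoff bound (e.g.\ Tropp) to the sum of independent PSD matrices $(\Wb\Ub)^T\Wb\Ub = \sum_j \tfrac{X_j}{\pib_j}\Ub[j,:]\Ub[j,:]^T$, whose mean is $\identity$ and whose summands have spectral norm $\le \norm{\Ub[j,:]}^2/\pib_j$; since $\tfrac{1}{\delta\epsilon}\gtrsim\log(1/\delta)$, the hypothesis in fact gives $\pib_j \gtrsim \ell_j\log(r/\delta)$, so each summand norm is $\lesssim 1/(c\log(r/\delta))$ and (A) holds with probability $\ge 1-\delta/2$. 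A union bound combines (A) and (B). The expectation claims are immediate: $\E[|S|] = \sum_j\pib_j$ by linearity of expectation, and $\sum_j\ell_j = r \le d$ gives $\E[|S|] = O(d\log d + d/(\epsilon\delta))$ when the $\pib_j$ match the stated bound up to constants.

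The main obstacle is the adaptation from with-replacement to independent sampling: I must confirm that the concentration arguments underlying~\cite{sarlos2006improved} survive when the summands are independent but \emph{not} identically distributed, and in particular handle the truncation $\min\{1,\cdot\}$, since rows with $\pib_j = 1$ are always included and contribute zero variance, so they only help. Reassuringly, independence across rows makes the off-diagonal cancellation in (B) exact rather than approximate, and the per-summand spectral bound required for matrix Chernoff is exactly what the leverage-score lower bound on $\pib_j$ supplies; the only care needed is to track the two roles of the hypothesis, namely the $\log r$ term for the high-probability embedding and the $\tfrac{1}{\delta\epsilon}$ term for the Markov step.
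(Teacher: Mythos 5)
Your proposal is correct, but note that the paper never actually proves this lemma: it is stated as ``Follows from~\cite{sarlos2006improved}'', with a one-line remark that the with-replacement guarantee of~\cite{sarlos2006improved} ``is not hard to observe'' to extend to independent Bernoulli sampling, and with the expectation bound attributed to the fact $\sum_{j}\ell_j(\Xb) = \rank(\Xb)\le d$. What you have written is a self-contained proof of precisely the step the paper delegates to the citation, and it follows the standard machinery that underlies the cited result: reduce to (A) a constant-factor subspace embedding of an orthonormal basis $\Ub$ and (B) approximate matrix--vector multiplication against the residual $\mathbf{b}^\perp$, then close via the normal equations. Your adaptation to independent non-identically-distributed sampling is the genuinely new content relative to the citation, and it is handled correctly: the exact variance computation for (B) (cross terms cancel by independence, truncated rows with $\pib_j=1$ contribute zero variance) plus Markov consumes the $1/(\delta\epsilon)$ term, while matrix Chernoff/Bernstein for (A) consumes the $\log r$ term; this split also matches how the lemma is consumed downstream (Claim~\ref{clm12} budgets the probability exactly this way). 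Two minor points to tighten. First, your inference $1/(\delta\epsilon)\gtrsim\log(1/\delta)$ needs $\epsilon = O(1)$; since $\delta\log(1/\delta)\le 1/e$ it holds whenever $\epsilon\le e$, which covers the paper's use (Theorem~\ref{thm:ite} sets $\epsilon = 120c_0 d\log d/s\le 1$) but not the lemma as literally stated for arbitrary $\epsilon$. Second, in the matrix Chernoff step the rows with $\pib_j=1$ give deterministic summands, so the concentration bound should formally be applied to the centered random part $\sum_{j:\pib_j<1}\frac{X_j-\pib_j}{\pib_j}\Ub[j,:]\Ub[j,:]^T$ (matrix Bernstein), a cosmetic fix that your observation that such rows ``only help'' already anticipates. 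Neither issue is a gap in substance; your write-up is, if anything, more rigorous than the paper's treatment.
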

Note that the bound on $\mathbb{E}[|S|]$ follows from the well known fact that the sum of leverage scores, is equal to the rank, i.e.,  $\sum_{j=1}^n \ell_j(\Xb) = \rank(\Xb) \le d$~\cite{woodruff2014sketching}.

The sampling probabilities are set to $\pib_j = \min \left\{1, \ell_j(\Xb^*) \cdot c_0 \cdot [\log (\mathrm{rank}(\Xb^*)) + {30}/{\epsilon} ] \right\}$ for some  constant $c_0 \ge 2c$, where $c$ is the constant in Lemma \ref{lem:linregression2}. Thus, by the lemma, we will have, with probability $\ge 29/30$, $\norm{\Xb^* \tilde \betab^0 - \Yb^0} \le (1+\epsilon) \norm{\Xb^* \betab^0 - \Yb^0}.$ 
It remains to show that we will have a similar guarantee for the control group. The rows in $S^1$ are included independently with probability $\pib_j \cdot (1-\pib_j)$. If we can prove that $\pib_j \cdot (1-\pib_j) \ge \frac{\pib_j}{2}$, then Lemma \ref{lem:linregression2} will still apply, since we have set $c_0 = 2c$. To do so, it suffices to argue that $\pib_j \le 1/2$ by setting the parameters appropriately.
 
\begin{claim}\label{clm12} If  $\gamma = \small{4 c_0 \max\left\{\log (\mathrm{rank}(\Xb^*)),30/\epsilon \right\} }$ and $\pib_j = \min \{1, \ell_j(\Xb^*) \cdot c_0 \cdot [\log (\mathrm{rank}(\Xb^*)) + 30/\epsilon ] \}$, we have  
$\pib_j \le 1/2$ for every $j \in [n]$.
\end{claim}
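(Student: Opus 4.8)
The plan is to reduce the statement to a direct substitution into the earlier leverage-score bound. Since $\pib_j = \min\{1, \ell_j(\Xb^*) \cdot c_0 \cdot [\log(\rank(\Xb^*)) + 30/\epsilon]\}$ and all quantities are nonnegative, we have $\pib_j \le 1/2$ if and only if the second argument of the minimum is at most $1/2$. So it suffices to show
\[
\ell_j(\Xb^*) \cdot c_0 \cdot \left[\log(\rank(\Xb^*)) + 30/\epsilon\right] \le \frac{1}{2}
\quad\text{for every } j \in [n].
\]

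First I would invoke Claim~\ref{cl:lev_ub}, which gives $\ell_j(\Xb^*) \le 1/\gamma$ uniformly over $j$. Substituting this bound yields
\[
\ell_j(\Xb^*) \cdot c_0 \cdot \left[\log(\rank(\Xb^*)) + 30/\epsilon\right]
\le \frac{c_0 \left[\log(\rank(\Xb^*)) + 30/\epsilon\right]}{\gamma}.
\]
Next I would apply the elementary inequality $a + b \le 2\max\{a,b\}$ to the bracketed term with $a = \log(\rank(\Xb^*))$ and $b = 30/\epsilon$, giving
\[
\log(\rank(\Xb^*)) + 30/\epsilon \le 2\max\left\{\log(\rank(\Xb^*)),\, 30/\epsilon\right\}.
\]
Finally, plugging in the definition $\gamma = 4 c_0 \max\{\log(\rank(\Xb^*)),\, 30/\epsilon\}$, the factors of $c_0$ and of $\max\{\log(\rank(\Xb^*)), 30/\epsilon\}$ cancel, leaving exactly $\tfrac{2}{4} = \tfrac12$, as required.

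There is no genuine obstacle in this claim itself: once the uniform leverage-score bound $\ell_j(\Xb^*)\le 1/\gamma$ is in hand, everything is a one-line cancellation, and the choice of the constant $4$ in the definition of $\gamma$ is precisely what is engineered to make the arithmetic close at $1/2$. The only place requiring care is that the min-with-one truncation does not interfere, which is handled at the outset by the observation that bounding the untruncated second argument by $1/2$ suffices. The real substance of the argument lives entirely in the previously established Claim~\ref{cl:lev_ub}, which controls the leverage scores of the smoothed matrix $\Xb^*$; this claim merely propagates that bound through the definitions of $\gamma$ and $\pib_j$.
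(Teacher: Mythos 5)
Your proof is correct and follows exactly the paper's argument: both bound $\pib_j$ by the untruncated term, invoke Claim~\ref{cl:lev_ub} to replace $\ell_j(\Xb^*)$ with $1/\gamma$, and close via $\log(\rank(\Xb^*)) + 30/\epsilon \le 2\max\{\log(\rank(\Xb^*)), 30/\epsilon\}$ combined with the definition of $\gamma$. Your write-up simply makes explicit the two small points the paper leaves implicit (that the min-with-one truncation is harmless, and the $a+b\le 2\max\{a,b\}$ step).
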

\begin{proof}
\begin{align*}
    \pib_j \le \ell_j(\Xb^*) \cdot c_0 \cdot [\log (\mathrm{rank}(\Xb^*)) + 30/\epsilon ]    &\le 1/\gamma \cdot c_0 \cdot [\log (\mathrm{rank}(\Xb^*)) + 30/\epsilon ] \mbox{ (Claim~\ref{cl:lev_ub})}\\
    &\le \frac{c_0 [\log (\mathrm{rank}(\Xb^*)) + 30/\epsilon ]}{4 c_0 \max\left\{\log (\mathrm{rank}(\Xb^*)),30/\epsilon \right\}} \le \frac{1}{2}.
\end{align*}
\end{proof}

In Appendix~\ref{app:ITE}, we argue that using the smoothed matrix $\Xb^*$ introduces an error of $\sqrt{ \gamma}$. Combining all of them, we have the following corollary:
\begin{corollary}\label{cor:error}
Suppose $\gamma$ and $\pib_j$ are set as in Claim \ref{clm12}, for some sufficiently large constant $c_0$.
Then, Algorithm~\samplingite satisfies, for $i=0,1$, with probability at least $14/15$:
\[\norm{\Xb^* \widetilde{\betab}^i - \Yi} \le \left(1+ \epsilon\right) \cdot \left(\sqrt{\gamma}\norm{\betab^i} + \norm{\etai} \right) \mbox{ for $i = 0, 1$}.\]
\end{corollary}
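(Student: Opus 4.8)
The plan is to combine the relative-error regression guarantee of Lemma~\ref{lem:linregression2} with a bound on the offline optimum $\min_{\betab} \norm{\Xb^* \betab - \Yi}$ that quantifies the error introduced by smoothing. First I would verify that Lemma~\ref{lem:linregression2} applies to both subsampled regressions with failure probability $\delta = 1/30$ (so that the $30/\epsilon$ term matches $1/(\delta\epsilon)$). For $i = 0$, the rows of $S^0$ are included independently with probability $\pib_j$, which by construction already meets the leverage-score threshold. For $i = 1$, the rows of $S^1 \setminus S^0$ are included independently with probability $\pib_j(1-\pib_j)$ (independence across $j$ is preserved since $S^0, S^1$ come from independent per-row Bernoulli draws); invoking Claim~\ref{clm12} to get $\pib_j \le 1/2$ gives $\pib_j(1-\pib_j) \ge \pib_j/2$, and since $c_0 \ge 2c$ this halved probability still clears the lemma's requirement. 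Thus each regression yields, with probability $\ge 29/30$, the bound $\norm{\Xb^* \widetilde{\betab}^i - \Yi} \le (1+\epsilon)\,\min_{\betab} \norm{\Xb^* \betab - \Yi}$.

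It then remains to bound the offline optimum, and here I would evaluate the objective at the ground-truth vector $\betab = \betab^i$ rather than at the true minimizer. Writing $\Yi = \Xb \betab^i + \etai$ from Assumption~\ref{assump:linearity} and applying the triangle inequality gives
\[ \min_{\betab} \norm{\Xb^* \betab - \Yi} \le \norm{\Xb^* \betab^i - \Yi} \le \norm{(\Xb^* - \Xb)\betab^i} + \norm{\etai}. \]
The heart of the argument is the spectral bound $\norm{\Xb - \Xb^*}_2 \le \sqrt{\gamma}$: by Definition~\ref{def:smoothed}, $\Xb - \Xb^*$ is exactly the portion of the SVD supported on the singular directions with $\sigma_i(\Xb) < \sqrt{\gamma}$, so its largest singular value, and hence its operator norm, is at most $\sqrt{\gamma}$. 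This yields $\norm{(\Xb^* - \Xb)\betab^i} \le \sqrt{\gamma}\,\norm{\betab^i}$, and therefore $\min_{\betab} \norm{\Xb^* \betab - \Yi} \le \sqrt{\gamma}\,\norm{\betab^i} + \norm{\etai}$.

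Combining the two displays gives the stated bound $\norm{\Xb^* \widetilde{\betab}^i - \Yi} \le (1+\epsilon)\,(\sqrt{\gamma}\,\norm{\betab^i} + \norm{\etai})$ for each fixed $i$. Finally I would union bound over the two failure events ($i = 0$ and $i = 1$), each of probability at most $1/30$, to conclude that the bound holds simultaneously for both $i$ with probability at least $1 - 2/30 = 14/15$.

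The main obstacle, though a mild one, is making precise that smoothing costs only $\sqrt{\gamma}$ in the regression residual. The key realization is that one should not attempt to reconstruct $\Yi$ from $\Xb^*$ via its true optimal coefficient vector, which may be hard to control, but instead plug in the known $\betab^i$ and charge the entire mismatch $\Xb\betab^i - \Xb^*\betab^i$ to the discarded low-singular-value subspace, whose operator norm is controlled by the very construction of $\Xb^*$. Everything else is a direct application of the already-established leverage-score sampling lemma together with a union bound.
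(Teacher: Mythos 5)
Your proposal is correct and follows essentially the same route as the paper: invoke the leverage-score sampling guarantee (Lemma~\ref{lem:linregression2}) for both regressions, using Claim~\ref{clm12} and $c_0 \ge 2c$ to handle the deflated probabilities $\pib_j(1-\pib_j) \ge \pib_j/2$ for $S^1$, then bound the offline optimum by plugging in $\betab^i$, applying the triangle inequality, and charging $\norm{(\Xb - \Xb^*)\betab^i} \le \sqrt{\gamma}\,\norm{\betab^i}$ to the discarded low-singular-value part of the SVD (the paper's Claim~\ref{cl:error_diff_app}), finishing with a union bound over the two $1/30$ failure events. The paper merely packages these same steps into intermediate statements (Lemma~\ref{lem:ite_regression_app} and Claim~\ref{cl:error_diff_app}); there is no substantive difference.
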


\paragraph{RMSE Guarantees.} The root mean squared error (Defn.~\ref{def:mse}) for the ITE estimates is given by:
$$\rmse= \frac{1}{\sqrt n} \norm{(\Xb^* \widetilde{\betab}^1 - \Xb^* \widetilde{\betab}^0) - (\Yone - \Yzero) }.$$

By setting $\epsilon = {120 c_0 d\log d/s}$ in Corollary~\ref{cor:error}, we get the following theorem for our Algorithm~\ref{alg:ite}:
\begin{theorem}\label{thm:ite}
Suppose $s \ge 120 c_0 d \log d$. There is a randomized algorithm that selects a subset $S \subseteq [n]$ of the population with $\E[|S|] \le s$, and, with probability at least $9/10$, returns ITE estimates $\hITE{j}$ for all $j \in [n]$ with error:
\begin{equation*}
\rmse = O\Bigl(\sqrt{\frac{1}{n}  \max\left\{{\frac{s}{d}}, \log d\right\}} \cdot (\norm{\betab^1} + \norm{\betab^0} ) +  \sigma  \Bigl).
 \end{equation*}
\end{theorem}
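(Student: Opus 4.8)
The plan is to reduce the RMSE directly to the two per-group regression guarantees already established in Corollary~\ref{cor:error}, and then carry out two routine pieces of bookkeeping: bounding the Gaussian noise norms $\norm{\etai}$ and substituting the chosen value of $\epsilon$. Starting from $\rmse = \frac{1}{\sqrt n}\norm{(\Xb^*\widetilde{\betab}^1 - \Xb^*\widetilde{\betab}^0) - (\Yone - \Yzero)}$, I first apply the triangle inequality to split this into $\frac{1}{\sqrt n}\left(\norm{\Xb^*\widetilde{\betab}^1 - \Yone} + \norm{\Xb^*\widetilde{\betab}^0 - \Yzero}\right)$. Corollary~\ref{cor:error} then bounds each summand by $(1+\epsilon)(\sqrt{\gamma}\norm{\betab^i} + \norm{\etai})$, jointly for $i=0,1$ with probability at least $14/15$.

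The next step is to control the noise terms $\norm{\etai}$. Since each $\etai$ has i.i.d.\ $N(0,\sigma^2)$ entries, $\E[\norm{\etai}^2] = n\sigma^2$, so Markov's inequality gives $\norm{\etai} \le \sqrt{60 n}\,\sigma$ with probability at least $1 - 1/60$; a union bound over $i=0,1$ contributes failure probability $2/60$. Combined with the $1/15$ failure probability from Corollary~\ref{cor:error}, the total failure probability is $1/15 + 2/60 = 1/10$, which is exactly the bound claimed. On the resulting good event, $\frac{1}{\sqrt n}(\norm{\etaone} + \norm{\etazero}) = O(\sigma)$, which will become the additive $\sigma$ term in the theorem.

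It remains to substitute $\epsilon = 120 c_0 d\log d / s$ together with the value of $\gamma$ from Claim~\ref{clm12}, and to verify the sample-size claim. First, since $s \ge 120 c_0 d\log d$ we have $\epsilon \le 1$, so $(1+\epsilon)\le 2$ is an absolute constant. Second, with this $\epsilon$ we get $30/\epsilon = s/(4 c_0 d\log d)$, whence $\gamma = 4c_0 \max\{\log(\rank(\Xb^*)),\, s/(4c_0 d\log d)\} = \max\{4c_0\log(\rank(\Xb^*)),\, s/(d\log d)\}$. Using $\rank(\Xb^*)\le d$ and $\log d \ge 1$, both terms are $O(\max\{s/d, \log d\})$, so $\sqrt\gamma = O(\sqrt{\max\{s/d,\log d\}})$; multiplying by $\frac{1}{\sqrt n}(\norm{\betab^1}+\norm{\betab^0})$ yields the first term of the stated bound. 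Finally, for the sample size I would bound $\E[|S|] \le 2\sum_j \pib_j$ (the two groups are disjoint and the $S^1$ inclusion probabilities $\pib_j(1-\pib_j)$ are at most $\pib_j$), and use $\sum_j \pib_j \le c_0[\log(\rank(\Xb^*)) + 30/\epsilon]\cdot \rank(\Xb^*) = O(d\log d + s/\log d)$, which for the given $\epsilon$ and $s \ge 120 c_0 d\log d$ is at most $s$.

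The genuinely substantive ingredient, namely the trade-off between the smoothing bias $\sqrt\gamma$ and the sampling budget, is already packaged into Corollary~\ref{cor:error} and Claim~\ref{clm12}, so no new idea is required. The part that needs the most care is the constant accounting: arranging simultaneously that the failure probabilities sum to exactly $1/10$ and that $\E[|S|]\le s$ holds with the stated leading constant $120 c_0$, while correctly propagating the $\gamma$ expression through the $\max\{s/d,\log d\}$ simplification.
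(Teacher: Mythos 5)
Your proposal is correct and follows essentially the same route as the paper's own proof: triangle inequality to split the RMSE into the two per-group residuals, Corollary~\ref{cor:error} applied to each, substitution of $\epsilon = 120c_0 d\log d/s$ and the resulting $\gamma$ into the $\max\{s/d,\log d\}$ form, and the same doubling-plus-leverage-score-sum argument for $\E[|S|]\le s$. The only (immaterial) difference is that you control $\norm{\etai}$ by Markov's inequality with failure probability $1/60$ per group, whereas the paper invokes the Gaussian norm bound of Fact~\ref{lem:gaussian_norm} with failure probability $1/n$ each (which is why its union bound carries a ``for large $n$'' caveat that your version avoids); both give the $O(\sigma)$ term and the stated $9/10$ success probability.
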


For the sake of simplicity of analysis, we used a constant success probability in Theorem~\ref{thm:ite}. All our claims can easily be updated with a general failure probability of $\delta$, with a dependence of $\sqrt{1/\delta}$, using Lemma~\ref{lem:linregression2}. The corollary below follows immediately from Theorem~\ref{thm:ite}.
\begin{corollary}[Main ITE Error Bound]\label{cor:ite}
The root mean squared error obtained by Algorithm~\ref{alg:ite} is minimized when $s = \Theta(d\log d)$ and is given by: $$\rmse = O\Bigl(\sqrt{\frac{\log d}{n}} \cdot (\norm{\betab^1} + \norm{\betab^0} ) + \sigma \Bigl).$$
\end{corollary}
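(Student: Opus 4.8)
The plan is to treat Corollary~\ref{cor:ite} as a direct consequence of Theorem~\ref{thm:ite}, obtained by optimizing the error bound over the sample budget $s$. First I would isolate the dependence on $s$: the additive $\sigma$ term is independent of $s$, so minimizing the \rmse reduces to minimizing the coefficient $\sqrt{\frac{1}{n}\max\{s/d,\log d\}}$ that multiplies $(\norm{\betab^1}+\norm{\betab^0})$, which in turn is equivalent to minimizing $\max\{s/d,\log d\}$.

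Next I would note that Theorem~\ref{thm:ite} is only available under its hypothesis $s \ge 120 c_0 d\log d$, so this is a \emph{constrained} minimization. Over the feasible region we have $s/d \ge 120 c_0 \log d \ge \log d$ (using $c_0 \ge 1$), so the first argument always dominates and $\max\{s/d,\log d\}=s/d$. Hence the error coefficient is monotonically increasing in $s$ throughout the feasible range, and its minimum is attained at the left endpoint $s=\Theta(d\log d)$.

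Finally, substituting $s=\Theta(d\log d)$ gives $s/d=\Theta(\log d)$, so $\max\{s/d,\log d\}=\Theta(\log d)$ and the coefficient collapses to $\sqrt{\frac{\log d}{n}}$; adding back the $\sigma$ term yields the stated bound. The one point requiring care — the main obstacle, such as it is — is recognizing that the sample-complexity constraint $s\ge 120 c_0 d\log d$ is \emph{binding}: one might naively hope to drive $s/d$ below $\log d$ to shrink the maximum further, but this is infeasible, so the optimum sits precisely at the threshold where leverage-score sampling first delivers the regression guarantee of Corollary~\ref{cor:error}. Everything else is routine monotonicity, so I would keep the argument to a single short paragraph.
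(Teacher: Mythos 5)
Your proposal is correct and follows essentially the same route as the paper: the paper derives Corollary~\ref{cor:ite} immediately from Theorem~\ref{thm:ite} by setting $s = \Theta(d\log d)$, and your observation that the constraint $s \ge 120 c_0 d\log d$ is binding (so $\max\{s/d,\log d\} = s/d$ is increasing over the feasible range) is exactly the monotonicity point the paper makes in the paragraph following the corollary. The only nuance worth keeping in mind is that what is minimized is the \emph{upper bound} on the \rmse, not the error itself --- a distinction the paper itself flags as an artifact of the analysis.
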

 
Our upper bound on \rmse for Algorithm~\ref{alg:ite} increases with $s$, if $s$ grows strictly faster than $d\log d$ asymptotically, i.e., $s = \omega(d\log d)$. Therefore, to obtain low error, we set $s = c \cdot d\log d$ for some constant $c$, even if the sample constraint allows for larger values. We believe this is an artifact of our analysis. In Section~\ref{sec:experiments}, we observe empirically that the error decreases with increasing $s$. 

\paragraph{Remark.} We observe that the RMSE bound in Corollary \ref{cor:ite} is nearly optimal, even for algorithms that \emph{experiment on the full population}. The $O(\sigma)$ term cannot be improved by more than constants, as a consequence of our noise model (see Assumption~\ref{assump:linearity}). Even if we knew the true $\betabone$ and $\betabzero$, our \rmse would be $O(\sigma)$. 

The term $(\norm{\betabzero}+\norm{\betabone})/\sqrt{n}$ is also necessary. Suppose the matrix $\Xb$ is such that all rows, except row $j$, are zero vectors. Row $j$ is a standard basis vector, i.e., its $i^{th}$ entry is $1$ for some $i$. Suppose also that $\betabone$ and $\betabzero$ are both independently set to the same standard basis vector with probability $1/2$, and set to zero otherwise. Then, with probability $1/2$, $\ITE{j} = 0$ and with probability $1/2$, $\ITE{j} = \pm 1$. No algorithm which observes just one of $\Yone_j$ or $\Yzero_j$ can obtain expected error o(1) in estimating $\ITE{j}$. That is, no algorithm can obtain \rmse $o(1/\sqrt{n}) = o\left ((\norm{\betabzero}+\norm{\betabone})/\sqrt{n}\right )$.
\section{ Average Treatment Effect Estimation}\label{sec:ate}
In this section, we describe our approach for estimating the average treatment effect, under the sample constraint, by building upon a recent work on efficient experimental design by Harshaw et al.~\cite{harshaw2019balancing}. Missing details from this section are collected in Appendix~\ref{app:ATE}.

\paragraph{Horvitz-Thompson Estimator.} Suppose $\Sb^+ \subseteq [n]$ is the population assigned to the treatment group and $\Sb^- = [n] \setminus \Sb^+$ is the remaining population, i.e., the control group. A well-studied estimator for estimating the average treatment effect is the Horvitz-Thompson estimator~\cite{horvitz1952generalization}, denoted by $\hat \tau$. If every individual is assigned to $\Sb^+$ (or $\Sb^-$) with probability $0.5$, then, $\hat \tau$ is defined as follows:
\[\hat \tau = \frac{2}{n} \left( \sum_{i \in \Sb^+} \Yone_i - \sum_{i \in \Sb^-} \Yzero_i \right).\]

\begin{algorithm}[!ht]
\caption{\textsc{Recursive-Covariate-Balancing}}
\label{alg:ate}
\begin{algorithmic}[1]
\small
\Statex \textbf{Input:} Covariate matrix $\Xb \in \mathbb{R}^{n \times d}$, number of experiments to be run $s$.
\Statex \textbf{Output:} Estimate for ATE.
\State Set $t=1, \Zb_t := \Xb$, $n_t = n$.
\While{{\bf True}}
\State $\Zb^+_{t}, \Zb^-_{t} \leftarrow \textsc{Gram-Schmidt-Walk}(\Zb_t, \delta')$ where $\delta' = \log(16 \log(n/s))$.
\If{$n_t \le s$}
\State \textbf{break}
\ElsIf{$\size{\Zb^+_{t}} \ge \size{\Zb^-_t}$}
\State Set $\Zb_{t+1} \leftarrow \Zb^-_t$ and $n_{t+1} \leftarrow \size{\Zb^-_t}$.
\Else
\State Set $\Zb_{t+1} \leftarrow \Zb^+_t$ and $n_{t+1} \leftarrow \size{\Zb^+_t}$.
\EndIf
\State $t \leftarrow t+1$
\EndWhile
\State Use $\Zb^+_t, \Zb^-_t$ to construct the ATE estimator as:
$\hat \tau_s = 2^t/n \cdot \left(\sum_{j \in \Zb^+_t} \Yone_j - \sum_{j \in \Zb^-_t} \Yzero_j  \right).$
\State \Return $\hat \tau_s$.
\end{algorithmic}
\end{algorithm}

Harshaw et al.~\cite{harshaw2019balancing} present an experimental design based on the Gram-Schmidt-Walk algorithm for discrepancy minimization~\cite{bansal2018gram}.  Their Gram-Schmidt-Walk design produces a random partition of the population with a good balance in every dimension, i.e., control and treatment groups have similar covariates. For the Horvitz-Thompson estimator, they give a tradeoff between covariate balancing and robustness (estimation error). Formally, they obtain:

\begin{lemma}[Proposition 3 in~\cite{harshaw2019balancing}]\label{lem:gsw}
For all $\Delta > 0$, with probability at least $1-2\exp{\left(- \frac{\Delta^2 n}{8L}\right)}$, the Gram-Schmidt-Walk design satisfies: $|\hat \tau - \tau| \le \Delta$, where $L = \frac{2}{n} \min_{\betab \in \mathbb{R}^d}\left(\norm{\frac{\Yone + \Yzero}{2} - \Xb \betab}^2 + \norm{\betab}^2 \right)$. 
\end{lemma}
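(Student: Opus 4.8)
The plan is to show that the estimation error $\hat\tau - \tau$ is a mean-zero linear functional of the random $\pm 1$ treatment-assignment vector produced by the Gram--Schmidt walk, and then to invoke the subgaussian concentration of that walk together with a ridge-regression decomposition to control its variance proxy. First I would encode the partition $(\Sb^+, \Sb^-)$ as a sign vector $z \in \{\pm 1\}^n$ with $z_i = +1$ iff $i \in \Sb^+$, so that $\mathbbm{1}[i \in \Sb^+] = (1+z_i)/2$ and $\mathbbm{1}[i \in \Sb^-] = (1-z_i)/2$. Substituting into the Horvitz--Thompson estimator and collecting terms gives
\[
\hat\tau \;=\; \frac{1}{n}\sum_{i=1}^n\bigl[(\Yone_i - \Yzero_i) + z_i(\Yone_i + \Yzero_i)\bigr] \;=\; \tau + \frac{2}{n}\,\langle z, \mu\rangle,
\]
where $\mu \eqdef (\Yone + \Yzero)/2$. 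Thus $\hat\tau - \tau = \tfrac{2}{n}\langle z,\mu\rangle$, and since the design is symmetric ($\E[z]=0$) this error has mean zero; it remains purely to control its tail.

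The heart of the argument is the subgaussianity of linear statistics of the GSW output. I would import the key property of the Gram--Schmidt walk design~\cite{bansal2018gram}: for the augmented covariate matrix used with its balance/robustness parameter (here governed by $\delta'$), the output $z$ is subgaussian, i.e. for every fixed $v \in \R^n$ we have $\Pr[\,|\langle z, v\rangle| \ge t\,] \le 2\exp(-t^2/(2\rho_v^2))$ with a variance proxy $\rho_v^2$ dominated by a ridge objective in the direction $v$. Concretely, the design is built so that $\rho_\mu^2 \le 2\min_{\betab}\bigl(\norm{\mu - \Xb\betab}^2 + \norm{\betab}^2\bigr) = nL$: the identity block of the augmented matrix charges the residual $\norm{\mu - \Xb\betab}^2$ (robustness, i.e. closeness to i.i.d. Rademacher), the covariate block charges $\norm{\betab}^2$ (balance across the $d$ covariate directions), and the optimal split between the two is exactly the ridge minimizer defining $L$. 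Establishing this subgaussian tail is the main obstacle --- it is the potential-function / martingale analysis of the walk's fractional trajectory, repackaged so that the variance proxy along $\mu$ equals (a constant times) the ridge loss. Everything else is bookkeeping.

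Finally I would assemble the tail bound. The design guarantees $\rho_\mu^2 \le nL$, which is just the definition $L = \tfrac{2}{n}\min_\betab(\norm{\mu-\Xb\betab}^2 + \norm{\betab}^2)$ rearranged as $nL = 2\min_\betab(\cdots)$. The event $|\hat\tau - \tau| \ge \Delta$ is the event $|\langle z,\mu\rangle| \ge n\Delta/2$, so the subgaussian bound yields
\[
\Pr\!\left[\,|\hat\tau - \tau| \ge \Delta\,\right] \le 2\exp\!\left(-\frac{(n\Delta/2)^2}{2\rho_\mu^2}\right) \le 2\exp\!\left(-\frac{n^2\Delta^2/4}{2 n L}\right) = 2\exp\!\left(-\frac{\Delta^2 n}{8L}\right),
\]
which is exactly the claimed bound. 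I expect steps one and three to be routine algebra and a single substitution; the entire difficulty is concentrated in step two, namely proving that the walk yields a subgaussian sign vector whose variance proxy along any direction is controlled by the corresponding ridge value. Since the statement is quoted verbatim as Proposition~3 of Harshaw et al.~\cite{harshaw2019balancing}, in the paper itself I would simply cite it, but the reduction above is how I would recover it from the underlying GSW guarantee.
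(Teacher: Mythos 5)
The paper does not prove this lemma at all---it imports it verbatim as Proposition~3 of Harshaw et al., exactly as you acknowledge at the end of your proposal, so on the question of ``approach'' you and the paper agree: cite it. Your reconstruction is moreover faithful to how the cited work actually establishes the bound---the sign-vector decomposition $\hat \tau - \tau = \frac{2}{n}\langle z, \mu\rangle$ with $\mu = (\Yone+\Yzero)/2$, the subgaussianity of the Gram--Schmidt walk output with a ridge-regression variance proxy, and the tail assembly $\frac{(n\Delta/2)^2}{2nL} = \frac{\Delta^2 n}{8L}$ all check out.
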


\paragraph{Overview of~\rgsw.} Our main idea in Algorithm~\ref{alg:ate} is to partition the population using the Gram-Schmidt-Walk design (GSW) recursively until the total size of population that we can experiment on reduces to $s$. In each recursive call, we start by partitioning the available individuals $\Zb_t$ into treatment and control groups, denoted by $\Zb^+_t, \Zb^-_t$ using GSW. Next, we identify the smaller of these two subsets, say $\Zb^+_t$ and recurse on $\Zb^+_t$. We stop after $k$ recursive calls when there are only $s$ individuals to experiment on, i.e., $|\Zb^+_k \cup \Zb^-_k| \le s$. Finally, we construct our estimator $\hat \tau_s$, similar to the Horvitz-Thompson estimator, by scaling the treatment and control contributions due to $\Zb^+_k$ and $\Zb^-_k$ using a factor $2^k$.

We note that our experimental design ensures that every individual is assigned to treatment or control with equal probability. This implies that on expectation, the sizes of the treatment and control groups are equal (for every partitioning). However, when we consider a particular assignment, it is possible that the size of the smaller partition is not exactly half of the population. As a result, the total number of samples used might be smaller by a factor of at most 2.

\subsection{Theoretical Guarantees}
Our analysis approach, inspired by the coreset construction for discrepancy minimization~\cite{karnin2019discrepancy}, is based on the observation that if we can obtain good estimates for the contributions $\sum_{i \in [n]} \Yone_i$ and $\sum_{i \in [n]} \Yzero_i$, we obtain a good estimate for ATE ($\tau$). Using the next lemma, we argue that after a call to GSW algorithm that partitions $[n]$ into the sets $\Sb^+$ and $\Sb^-$, we can obtain additive approximations of $\sum_{i \in [n]} \Yone_i$ and $\sum_{i \in [n]} \Yzero_i$. Our approximations are the contributions of treatment and control values in $\Sb^+$ and $\Sb^-$ scaled appropriately, i.e., $\sum_{i \in \Sb^+} 2 \cdot \Yone_i$ and  $\sum_{i \in \Sb^-} 2 \cdot \Yzero_i$.

\begin{lemma}\label{lem:rec_gsw}
Suppose the Gram-Schmidt-Walk design~\cite{harshaw2019balancing} partitions the population $[n]$ into two disjoint groups $\Sb^+$ and $\Sb^-$. Under the linearity assumption, with probability $1-1/3\log(n/s)$, for both the control and treatment groups, the following holds:
\small
\[\left|\sum_{j \in \Sb^+}2 \Yi_j - \sum_{j \in [n] } \Yi_j  \right| \le 4 \sqrt{\log (16 \log (n/s))} \cdot  \left(2\sigma \sqrt{n} + \norm{\betab^i} \right) \quad \mbox{ for $i = 0, 1$}. \]
\end{lemma}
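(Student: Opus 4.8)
The plan is to reduce the claimed two-sided bound to a sub-Gaussian concentration statement for a single linear functional of the Gram--Schmidt--Walk (GSW) assignment, and then to extract that statement from Lemma~\ref{lem:gsw} by feeding it a carefully chosen pair of \emph{hypothetical} outcome vectors. First I would encode the partition as a sign vector $z \in \{-1,+1\}^n$, setting $z_j = +1$ if $j \in \Sb^+$ and $z_j = -1$ if $j \in \Sb^-$. Since $1+z_j$ equals $2$ on $\Sb^+$ and $0$ on $\Sb^-$,
\[ \sum_{j \in \Sb^+} 2\Yi_j - \sum_{j \in [n]} \Yi_j = \sum_{j \in [n]}(1+z_j)\Yi_j - \sum_{j \in [n]}\Yi_j = \sum_{j \in [n]} z_j \Yi_j = \langle z, \Yi\rangle. \]
Hence it is enough to show $|\langle z, \Yi\rangle| \le 4\sqrt{\log(16\log(n/s))}\,(2\sigma\sqrt{n} + \norm{\betab^i})$ with the stated probability, for each $i \in \{0,1\}$.

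Next I would use the fact that the GSW design draws $z$ from a distribution depending only on the covariates $\Xb$ (and the robustness parameter $\delta'$), and never inspects the outcomes. Consequently the concentration bound of Lemma~\ref{lem:gsw} is really a statement about $z$ that holds for \emph{any} fixed outcome vectors plugged into the Horvitz--Thompson estimator. A direct expansion gives $\hat\tau - \tau = \frac1n \sum_j z_j(\Yone_j + \Yzero_j)$, so instantiating Lemma~\ref{lem:gsw} with the hypothetical choice $\Yone \leftarrow \Yi$ and $\Yzero \leftarrow \Yi$ yields $\hat\tau - \tau = \tfrac{2}{n}\langle z, \Yi\rangle$ and $L = \tfrac{2}{n}\min_{\betab}\bigl(\norm{\Yi - \Xb\betab}^2 + \norm{\betab}^2\bigr)$. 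Therefore, for every $\Delta > 0$, with probability at least $1 - 2\exp(-\Delta^2 n / 8L)$ over the assignment,
\[ \tfrac{2}{n}\,|\langle z, \Yi\rangle| \le \Delta, \qquad\text{equivalently}\qquad |\langle z, \Yi\rangle| \le \tfrac{n\Delta}{2}. \]

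It then remains to bound $L$ and to choose $\Delta$. Using Assumption~\ref{assump:linearity} and the feasible choice $\betab = \betab^i$ in the minimization, $L \le \tfrac{2}{n}\bigl(\norm{\etai}^2 + \norm{\betab^i}^2\bigr)$; and since $\norm{\etai}^2/\sigma^2$ is a $\chi^2_n$ variable, a standard tail bound gives $\norm{\etai} \le 2\sigma\sqrt{n}$, hence $nL \le 2(4n\sigma^2 + \norm{\betab^i}^2)$, except with probability exponentially small in $n$. I would then pick $\Delta$ so that $\Delta^2 n / (8L) = \delta' = \log(16\log(n/s))$, matching the parameter passed to GSW in Algorithm~\ref{alg:ate}; this makes the assignment-failure probability at most $2\exp(-\delta') = 1/(8\log(n/s))$. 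With this choice $\tfrac{n\Delta}{2} = \sqrt{2nL\,\log(16\log(n/s))}$, and substituting the bound on $nL$ together with $\sqrt{4n\sigma^2 + \norm{\betab^i}^2} \le 2\sigma\sqrt{n} + \norm{\betab^i}$ gives $\tfrac{n\Delta}{2} \le 2\sqrt{\log(16\log(n/s))}\,(2\sigma\sqrt{n} + \norm{\betab^i})$, comfortably within the claimed bound. A union bound over $i \in \{0,1\}$ and over the two noise events keeps the total failure probability below $1/(3\log(n/s))$.

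The hard part will be the conceptual reduction in the second paragraph: recognizing that Lemma~\ref{lem:gsw}, although phrased in terms of the true treatment and control outcomes, is at heart a concentration inequality for the covariate-only random assignment $z$, so that setting \emph{both} outcome vectors equal to $\Yi$ isolates exactly the sum $\langle z, \Yi\rangle$ we need. The remaining obstacle is careful bookkeeping of the two independent sources of randomness --- the Gaussian noise $\etai$ (which fixes the finite-population outcomes) and the GSW assignment $z$ --- which I would handle by conditioning first on the high-probability event $\{\norm{\etai} \le 2\sigma\sqrt{n}\}$ and only then invoking the assignment-level concentration, so that $L$ may be treated as essentially deterministic when selecting $\Delta$.
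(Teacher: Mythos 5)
Your proposal is correct and takes essentially the same route as the paper: the paper's proof likewise instantiates Lemma~\ref{lem:gsw} with both outcome vectors set to $\Yi$ (phrased as ``for the sake of analysis, consider the setting where $\Yone_i = \Yzero_i$''), bounds $L$ via the feasible point $\betab^i$ together with Fact~\ref{lem:gaussian_norm}, chooses $\Delta$ to match $\delta' = \log(16\log(n/s))$, and then converts the bound on $\sum_{j \in \Sb^+}\Yi_j - \sum_{j \in \Sb^-}\Yi_j$ into the claimed form using $\sum_{j \in \Sb^+}\Yi_j + \sum_{j \in \Sb^-}\Yi_j = \sum_{j \in [n]}\Yi_j$. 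The only differences are cosmetic: your sign-vector formulation $\langle z, \Yi\rangle$ replaces the paper's explicit set manipulation, and your conditioning-on-the-noise bookkeeping is stated more explicitly than in the paper.
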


Building upon the previous lemma, we argue in Theorem~\ref{thm:rgsw_ate} that the additive approximation errors obtained from repeated use of GSW in our algorithm~\rgsw result in a low estimation error. 
\begin{theorem}[Main ATE Error Bound]\label{thm:rgsw_ate}
The estimator $\hat \tau_s$ in Algorithm~\rgsw obtains the following guarantee, with probability at least $2/3$:
\[ \left|\hat \tau_s - \tau \right| = O\left( \sqrt{\log \log (n/s)} \cdot \left(\frac{\sigma}{\sqrt{s}} +  \frac{\norm{\betabone} + \norm{\betabzero}}{s} \right)\right). \]
\end{theorem}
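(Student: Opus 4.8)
The plan is to write the estimation error as a telescoping sum over the recursion levels of Algorithm~\rgsw, control each level's contribution with Lemma~\ref{lem:rec_gsw}, and then exploit the geometric shrinkage of the subset sizes to sum the per-level errors. Let $k$ be the number of iterations of the while-loop, and let $[n] = V_1 \supseteq V_2 \supseteq \cdots \supseteq V_k$ be the nested sets the recursion operates on, so $\Zb_t$ is the restriction of $\Xb$ to the rows in $V_t$ and $|V_t| = n_t$. Because each level keeps the smaller of the two parts, $n_{t+1} \le n_t/2$ and hence $n_t \le n/2^{t-1}$; together with the stopping rule $n_{k-1} > s \ge n_k$ this gives $k \le \log_2(n/s) + O(1)$ and $2^k = O(n/s)$.

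First I would track, for each outcome type $i \in \{0,1\}$, the scaled partial totals $B^i_t \eqdef 2^{t-1}\sum_{j\in V_t}\Yi_j$, so that $B^i_1 = \sum_{j\in[n]}\Yi_j$ is the true population total. At level $t$ the \gsw call splits $V_t$ into $V^+_t$ and $V^-_t$, and since $2\sum_{j\in V^{\pm}_t}\Yi_j - \sum_{j\in V_t}\Yi_j$ differ only in sign, a single invocation of Lemma~\ref{lem:rec_gsw} on the set $V_t$ (of size $n_t$, rather than on all of $[n]$) bounds the deviation for \emph{either} part, simultaneously for $i=0,1$ and with probability $1 - 1/(3\log(n/s))$:
\[ \left| 2\sum_{j\in V^{\pm}_t}\Yi_j - \sum_{j\in V_t}\Yi_j \right| \;\le\; 4\sqrt{\log(16\log(n/s))}\,\bigl(2\sigma\sqrt{n_t} + \norm{\betab^i}\bigr) \;\eqdef\; \mathrm{Err}^i_t. \]
This is what lets me handle both the retained part $V_{t+1}$ at intermediate levels and the final treatment/control groups $\Zb^+_k,\Zb^-_k$ used by the estimator.

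Next I would telescope. Writing the two estimator contributions as $\wt B^1 = 2^k\sum_{j\in\Zb^+_k}\Yone_j$ and $\wt B^0 = 2^k\sum_{j\in\Zb^-_k}\Yzero_j$, the per-level bounds yield $|B^i_1 - \wt B^i| \le \sum_{t=1}^{k} 2^{t-1}\mathrm{Err}^i_t$. Substituting $n_t \le n/2^{t-1}$, the noise part contributes $2^{t-1}\cdot 2\sigma\sqrt{n_t} \le 2\sigma\sqrt{n}\,2^{(t-1)/2}$, a geometric series whose sum is dominated by its last term $\Theta(\sqrt n\,2^{k/2}) = O(n/\sqrt s)$, while the bias part contributes $2^{t-1}\norm{\betab^i}$, summing to $O(2^k\norm{\betab^i}) = O((n/s)\norm{\betab^i})$. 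Hence $|B^i_1 - \wt B^i| = O\!\bigl(\sqrt{\log\log(n/s)}\,(\sigma n/\sqrt s + (n/s)\norm{\betab^i})\bigr)$. Since $\hat\tau_s = (\wt B^1 - \wt B^0)/n$ and $\tau = (B^1_1 - B^0_1)/n$, the triangle inequality and division by $n$ give the claimed bound $|\hat\tau_s - \tau| = O\!\bigl(\sqrt{\log\log(n/s)}\,(\sigma/\sqrt s + (\norm{\betabone}+\norm{\betabzero})/s)\bigr)$.

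For the probability I would union-bound over the $k = O(\log(n/s))$ levels; each fails with probability at most $1/(3\log(n/s))$ conditioned on the history, which is legitimate because the \gsw randomization at level $t$ is fresh given $V_t$ and Lemma~\ref{lem:rec_gsw} holds for any fixed input set, so the total failure probability is at most $1/3$. The step I expect to be the main obstacle is justifying that Lemma~\ref{lem:rec_gsw} really applies to the random intermediate sets $V_t$ with the \emph{size-dependent} noise scale $\sqrt{n_t}$ rather than $\sqrt n$: this decay is precisely what keeps the noise series geometric (so the $\sigma$ term stays $O(\sigma/\sqrt s)$ rather than blowing up by a factor of $2^{k/2}$), and it requires checking that the fixed robustness parameter $\delta' = \log(16\log(n/s))$ chosen once from the global $n$ and $s$ still yields the per-level guarantee on every subset in the recursion.
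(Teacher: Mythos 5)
Your proposal is correct and follows essentially the same route as the paper's own proof: the same telescoping decomposition over the recursion levels, the same per-level application of Lemma~\ref{lem:rec_gsw} to the intermediate sets $\Zb_t$ with the size-dependent noise scale $\sqrt{n_t}$ (precisely what the paper does, implicitly reproving the lemma on each sub-population with the globally fixed $\delta'$), the same geometric-series summation giving $O(\sigma n/\sqrt{s})$ and $O((n/s)\norm{\betab^i})$, and the same union bound over the $O(\log(n/s))$ levels. The only cosmetic difference is that you bound the noise series via $n_t \le n/2^{t-1}$ working down from $n$, whereas the paper writes the same series as $\sqrt{s} + \sqrt{2s}/2 + \cdots$ working up from $s$.
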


\paragraph{Remark.} When $s = n$, the above theorem matches the guarantees obtained by GSW design described in Lemma~\ref{lem:gsw}. Moreover, we obtain a better dependence compared to sampling $s$ rows uniformly at random and using the $\Yone, \Yzero$ values of the sampled rows to estimate the population mean of treatment and control groups in ATE. An application of standard concentration inequalities or the central limit theorem, will yield a multiplicative factor increase in one of the error terms, with a dependence of $\tilde{O}\left(1/s \cdot {\norm{\Xb}_2 (\norm{\betabone} + \norm{\betabzero})}\right)$, instead of the $\tilde{O}\left(1/s \cdot ({\norm{\betabone} + \norm{\betabzero}})\right)$ obtained by our algorithm, where $\norm{\Xb}_2$ denotes the spectral norm of $\Xb$ and $\tilde{O}(\cdot)$ hides the logarithmic factors.

\section{Experimental Evaluation}\label{sec:experiments}
In this section, we provide an evaluation of our algorithms on various semi-synthetic datasets. Missing details about data generation and additional results are collected in Appendix~\ref{app:experiments}. Our code is publicly accessible using the following \href{https://github.com/raddanki/Sample-Constrained-Treatment-Effect-Estimation}{github repository}. 

\paragraph{Data Generation.} We evaluate our approaches on five datasets: \emph{(i) IHDP.} This contains data regarding the cognitive development of children, and consists of $747$ samples with $25$ covariates describing properties of the children and their mothers, and whose outcome values are simulated~\cite{hill2011bayesian, dorie2016npci}. \emph{(ii) Twins.} This contains data regarding the mortality rate in twin births in the USA between 1989-1991~\cite{almond2005costs}. Following the work of~\cite{louizos2017causal}, we select twins belonging to same-sex, with weight less than 2kg, resulting in about 11984 twin pairs, each with 48 covariates. We use the post-treatment mortality outcomes of the twins as potential outcomes. \emph{(iii) LaLonde.} This contains data regarding the effectiveness of a job training program on the real earnings of an individual after completion of the program~\cite{lalonde1986evaluating}, which is also the outcome value. The corresponding covariate matrix contains $445$ rows and $10$ covariates per row. \emph{(iv) Boston.} This is constructed based on the housing prices in the Boston area~\cite{harrison1978hedonic}. The outcome value for each sample represents the median house price. The corresponding covariate matrix contains $506$ rows and $12$ covariates per row. 
\emph{(v) Synthetic.} We construct a covariate matrix $\Xb \in \mathbb{R}^{2000 \times 25}$, using an approach due to~\cite{ma2014statistical}. There is a high disparity in leverage score values in $\Xb$, similar to what we observe in other datasets. Using a random linear function on $\Xb$ and adding Gaussian noise, we generate the potential outcomes.

For IHDP and Twins datasets, we use the simulated values for potential outcomes, similar to Shalit et al.~\cite{shalit2017estimating} and Louizos et al.~\cite{louizos2017causal}.  For the Synthetic dataset, we simulate values for the outcomes using linear functions of the covariate matrix. For Boston, Lalonde datasets, as we have access to only one of the outcome values, we chose to compare our algorithms for a fixed shift in treatment effect (i.e., the true treatment effect is equal to a constant), similar to Arbour et al.~\cite{pmlr-v130-arbour21a}.

\paragraph{Baselines.}  
\begin{enumerate}
    \item \textbf{ATE.} We compare the performance of our Algorithm~\rgsw (referred to as \emph{`Recursive-GSW'}) to three baselines: \emph{(i) Uniform}. We sample $s$ rows uniformly at random and assign them to treatment and control groups with equal probability. By scaling the total sum of treatment values from the sampled set by the inverse sampling probability, we estimate the contribution of treatment values in ATE and follow a similar procedure for the control group.  \emph{(ii) GSW-pop}. We use the GSW algorithm to partition the full population and return the estimate obtained using the  Horvitz-Thompson estimator for ATE. \emph{(iii) Complete Randomization}. We partition the population into treatment and control using complete randomization, i.e., with equal probability, and return the estimate obtained using the Horvitz-Thompson estimator for ATE. The last two baselines are overall $n$ individuals rather than a subset of size $s$.
\item \textbf{ITE.} We compare the performance of our Algorithm~\samplingite (referred to as \textit{`Leverage'}) with respect to three baselines: \emph{ (i) Uniform}. We run Algorithm~\ref{alg:ite} on $\Xb$ and uniform sampling distribution given by $\pib_j = s/n \ \forall j$. \emph{(ii) Leverage-nothresh}. We run Algorithm~\ref{alg:ite} on $\Xb$, instead of $\Xb^*$ with the probability distribution $\pib_j \propto \ell_j(\Xb) \forall j$. \emph{(iii) Lin-regression}. This captures the best linear fit regression error, i.e., assuming we have access to both $\Yone, \Yzero$, we regress these vectors on $\Xb$ to obtain $\betabone, \betabzero$, and use the resultant ITE estimates $\Xb \betabone - \Xb \betabzero$.
\end{enumerate}
 \begin{figure}[H]
    \begin{minipage}{0.48\linewidth}
   \begin{figure}[H]
      \subfigure[\small{IHDP}]{\includegraphics[width=\linewidth]{./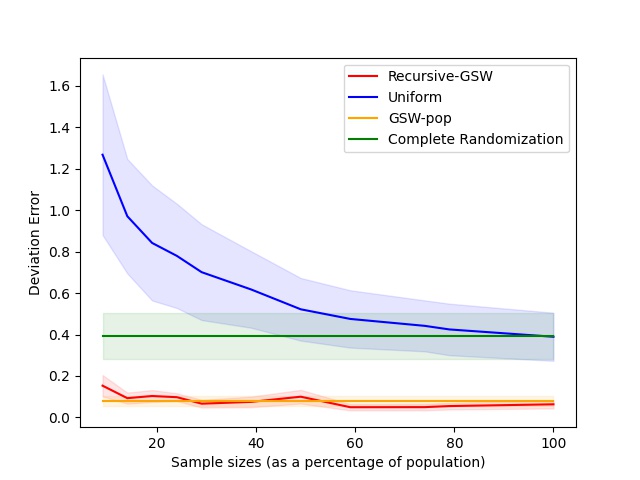}}
    \subfigure[\small{Twins}]{\includegraphics[width=\linewidth]{./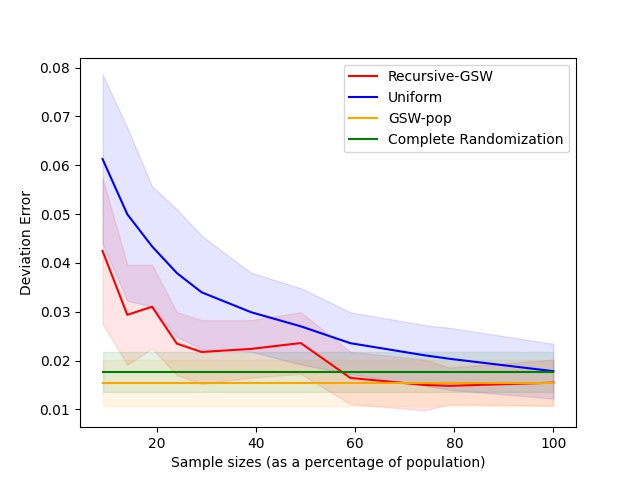}}
       \caption{\small{We compare the performance of various methods for estimating ATE, measured using deviation error on $y$-axis, against different sample sizes (as proportion of dataset size) on $x$-axis.}}
       \label{fig:ATE_main}
   \end{figure}
    \end{minipage}
    \begin{minipage}{0.04\linewidth}
    \end{minipage}
    \begin{minipage}{0.48\linewidth}
    \begin{figure}[H]
         \subfigure[\small{IHDP}]{\includegraphics[width=\linewidth]{./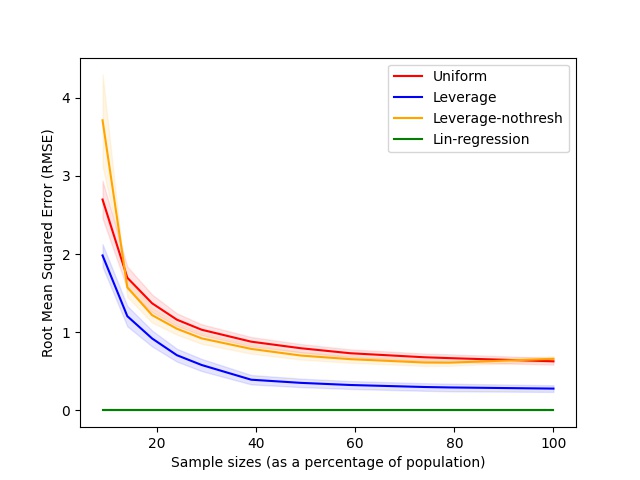}}
    \subfigure[\small{Synthetic}]{\includegraphics[width=\linewidth]{./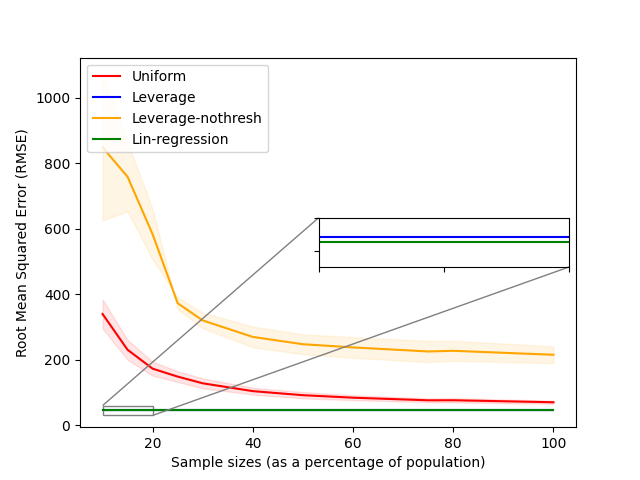}}
        \caption{\small{We compare the performance of various methods for estimating ITE, measured using \rmse on $y$-axis, against different sample sizes (as proportion of dataset size) on $x$-axis.}}
        \label{fig:ITE_main}
    \end{figure}
    \end{minipage}
\end{figure}
\paragraph{Evaluation.} To evaluate the performance of average treatment effect estimation ($\tau$) on the datasets, we compare the deviation error of the estimator $\hat \tau_s$, given by $|\hat \tau_s - \tau|$ for different sample sizes. To evaluate the performance of individual treatment effect estimates, we compare the root mean squared error RMSE (see Defn.~\ref{def:mse}) for different sample sizes.

\paragraph{Results.} For every dataset, we run each experiment for 1000 trials and plot the mean using a colored line. Also, we shade the region between $30$ and $70$ percentile around the mean to signify the \textit{confidence interval} as shown in Figures~\ref{fig:ATE_main}, ~\ref{fig:ITE_main} representing ATE and ITE results respectively.  

\begin{enumerate}
    \item \textbf{ATE}. For all datasets, we observe that the deviation error obtained by our algorithm labeled as \textit{Recursive-GSW} in Figure~\ref{fig:ATE_main}, is significantly smaller than that of \emph{Uniform} baseline. Surprisingly, for the IHDP dataset, our approach is significantly better than \emph{Complete-randomization}, for all sample sizes, including using just $10\%$ of data. For all the remaining datasets using a sample of size $30\%$, we achieve the same error (up to the {confidence interval}) as that of \emph{Complete-randomization}. Complete randomization is one of the most commonly used methods for experimental design and our results indicate a substantial reduction in experimental costs. For {IHDP} dataset, a sample size of about $10\%$ of the population is sufficient to achieve a similar error as that of \emph{GSW-pop}. For the remaining datasets, we observe that for sample sizes of about $30\%$ of the population, the deviation error obtained by our algorithm is within the shaded {confidence interval} of the error obtained by \emph{GSW-pop}. Therefore, for a specified error tolerance level for ATE, we can reduce the associated experimental costs using just a small subset of the dataset using our algorithm.
    \item \textbf{ITE}.  For all sample sizes, we observe that the \rmse obtained by our algorithm labeled as \emph{Leverage} in Figure~\ref{fig:ITE_main}, is significantly smaller than that of all the other baselines, including \emph{Uniform} and \emph{Leverage-nothresh}. E.g., we observe that when the sample size is $20\%$ of the population in {IHDP} dataset, the error obtained by \emph{Leverage} is at least $50\%$ times smaller than that of \emph{Uniform} and \emph{Leverage-nothresh}. For the Synthetic dataset, the error obtained by \emph{Leverage} is extremely close to that of the error due to the best linear fit, \emph{Lin-regression} (see the zoomed in part of the figure). Similar to ATE results, our algorithms result in a reduction of experimental costs for ITE estimation using only a fraction of the dataset.
\end{enumerate}

\section{Conclusion} 
We study the sample constrained treatment effect estimation problem and give efficient algorithms for both ITE and ATE estimation. Our empirical evaluation shows that our algorithms, using only a fraction of the data, perform well compared to popular baselines that are widely used and require running experiments on the entire population. There are several interesting directions for future work. It would be interesting to study sample constrained treatment effect estimation under interference~\cite{ugander2013graph}. For ITE estimation, we leave it as an open question to extend our approach to give an algorithm with an error growing smaller with $s$ for all values of $s \le n$. Moreover, the $\sqrt{\log d}$ factor in Corollary \ref{cor:ite} likely can be improved, using recent work which improves standard leverage score sampling bounds by a $\log d$ factor \cite{chen2019active}, yielding a bound that is optimal up to constants. For ATE estimation, the Horvitz-Thompson estimator can include sampling probabilities different from $0.5$. It is an interesting open question to extend our recursive balancing approach when the estimator contains arbitrary probabilities. 

\section*{Acknowledgements} 
Most of this work was done while R. Addanki was a student at the University of Massachusetts Amherst and an intern at Adobe. Part of this work was done while R. Addanki was a visiting student at the Simons Institute for the Theory of Computing. This work was supported by a Dissertation Writing Fellowship awarded by the Manning College of Information and Computer Sciences, University of Massachusetts Amherst to R. Addanki. R. Addanki and C. Musco were additionally supported in part by NSF grants CCF-2046235, IIS-1763618, CCF-1934846, CCF-1908849, and CCF-1637536, along with Adobe and Google Research Grants.

\bibliographystyle{alpha}
\bibliography{ref}

\newcommand{\etalchar}[1]{$^{#1}$}
\begin{thebibliography}{AKMM20}

\bibitem[ACD21]{alieva2021robust}
Ayya Alieva, Ashok Cutkosky, and Abhimanyu Das.
\newblock Robust pure exploration in linear bandits with limited budget.
\newblock In {\em \ICML{2021}}, pages 187--195. PMLR, 2021.

\bibitem[ACL05]{almond2005costs}
Douglas Almond, Kenneth~Y Chay, and David~S Lee.
\newblock The costs of low birth weight.
\newblock {\em The Quarterly Journal of Economics}, 120(3):1031--1083, 2005.

\bibitem[ADR21]{pmlr-v130-arbour21a}
David Arbour, Drew Dimmery, and Anup Rao.
\newblock Efficient balanced treatment assignments for experimentation.
\newblock In {\em \AISTATS{2021}}, volume 130 of {\em Proceedings of Machine
  Learning Research}, pages 3070--3078. PMLR, 2021.

\bibitem[AI16]{athey2016recursive}
Susan Athey and Guido Imbens.
\newblock Recursive partitioning for heterogeneous causal effects.
\newblock {\em Proceedings of the National Academy of Sciences},
  113(27):7353--7360, 2016.

\bibitem[AKMM20]{addanki2020efficient}
Raghavendra Addanki, Shiva Kasiviswanathan, Andrew McGregor, and Cameron Musco.
\newblock Efficient intervention design for causal discovery with latents.
\newblock In {\em \ICML{2020}}, pages 63--73. PMLR, 2020.

\bibitem[AM15]{alaoui2015fast}
Ahmed Alaoui and Michael~W Mahoney.
\newblock Fast randomized kernel ridge regression with statistical guarantees.
\newblock {\em \NIPS{2015}}, 28, 2015.

\bibitem[AMM21]{addanki2021intervention}
Raghavendra Addanki, Andrew McGregor, and Cameron Musco.
\newblock Intervention efficient algorithms for approximate learning of causal
  graphs.
\newblock In {\em \ALT{2021}}, pages 151--184. PMLR, 2021.

\bibitem[BCS{\etalchar{+}}20]{bondorf2020sublinear}
Steffen Bondorf, Binbin Chen, Jonathan Scarlett, Haifeng Yu, and Yuda Zhao.
\newblock Sublinear-time non-adaptive group testing with o (k log n) tests via
  bit-mixing coding.
\newblock {\em IEEE Transactions on Information Theory}, 67(3):1559--1570,
  2020.

\bibitem[BDGL18]{bansal2018gram}
Nikhil Bansal, Daniel Dadush, Shashwat Garg, and Shachar Lovett.
\newblock The gram-schmidt walk: a cure for the banaszczyk blues.
\newblock In {\em \STOC{2018}}, pages 587--597, 2018.

\bibitem[CCJS11]{chan2011non}
Chun~Lam Chan, Pak~Hou Che, Sidharth Jaggi, and Venkatesh Saligrama.
\newblock Non-adaptive probabilistic group testing with noisy measurements:
  Near-optimal bounds with efficient algorithms.
\newblock In {\em 2011 49th Annual Allerton Conference on Communication,
  Control, and Computing (Allerton)}, pages 1832--1839. IEEE, 2011.

\bibitem[CDL13]{cohen2013stability}
Albert Cohen, Mark~A Davenport, and Dany Leviatan.
\newblock On the stability and accuracy of least squares approximations.
\newblock {\em Foundations of computational mathematics}, 13(5):819--834, 2013.

\bibitem[CP19]{chen2019active}
Xue Chen and Eric Price.
\newblock Active regression via linear-sample sparsification.
\newblock In {\em \COLT{2019}}, pages 663--695. PMLR, 2019.

\bibitem[DHH00]{du2000combinatorial}
Dingzhu Du, Frank~K Hwang, and Frank Hwang.
\newblock {\em Combinatorial group testing and its applications}, volume~12.
\newblock World Scientific, 2000.

\bibitem[DLM17]{ding2017bridging}
Peng Ding, Xinran Li, and Luke~W Miratrix.
\newblock Bridging finite and super population causal inference.
\newblock {\em Journal of Causal Inference}, 5(2), 2017.

\bibitem[Dor16]{dorie2016npci}
Vincent Dorie.
\newblock Npci: Non-parametrics for causal inference.
\newblock {\em URL: https://github. com/vdorie/npci}, 2016.

\bibitem[Ebe07]{eberhardt2007causation}
Frederick Eberhardt.
\newblock Causation and intervention.
\newblock {\em PhD Thesis, Carnegie Mellon University}, 2007.

\bibitem[GLSR04]{greevy2004optimal}
Robert Greevy, Bo~Lu, Jeffrey~H Silber, and Paul Rosenbaum.
\newblock Optimal multivariate matching before randomization.
\newblock {\em Biostatistics}, 5(2):263--275, 2004.

\bibitem[GSKB18]{ghassami2018budgeted}
AmirEmad Ghassami, Saber Salehkaleybar, Negar Kiyavash, and Elias Bareinboim.
\newblock Budgeted experiment design for causal structure learning.
\newblock In {\em \ICML{2018}}, pages 1724--1733. PMLR, 2018.

\bibitem[Hil11]{hill2011bayesian}
Jennifer~L Hill.
\newblock Bayesian nonparametric modeling for causal inference.
\newblock {\em Journal of Computational and Graphical Statistics},
  20(1):217--240, 2011.

\bibitem[HJR78]{harrison1978hedonic}
David Harrison~Jr and Daniel~L Rubinfeld.
\newblock Hedonic housing prices and the demand for clean air.
\newblock {\em Journal of environmental economics and management},
  5(1):81--102, 1978.

\bibitem[HSSZ19]{harshaw2019balancing}
Christopher Harshaw, Fredrik S{\"a}vje, Daniel Spielman, and Peng Zhang.
\newblock Balancing covariates in randomized experiments using the gram-schmidt
  walk.
\newblock {\em arXiv preprint arXiv:1911.03071}, 2019.

\bibitem[HT52]{horvitz1952generalization}
Daniel~G Horvitz and Donovan~J Thompson.
\newblock A generalization of sampling without replacement from a finite
  universe.
\newblock {\em Journal of the American statistical Association},
  47(260):663--685, 1952.

\bibitem[Ima08]{imai2008variance}
Kosuke Imai.
\newblock Variance identification and efficiency analysis in randomized
  experiments under the matched-pair design.
\newblock {\em Statistics in medicine}, 27(24):4857--4873, 2008.

\bibitem[JTvA{\etalchar{+}}21]{jesson2021causal}
Andrew Jesson, Panagiotis Tigas, Joost van Amersfoort, Andreas Kirsch, Uri
  Shalit, and Yarin Gal.
\newblock Causal-bald: Deep bayesian active learning of outcomes to infer
  treatment-effects from observational data.
\newblock {\em \NIPS{2021}}, 34, 2021.

\bibitem[Kal17]{kallusapriori}
Nathan Kallus.
\newblock Optimal a priori balance in the design of controlled experiments.
\newblock {\em Journal of the Royal Statistical Society: Series B (Statistical
  Methodology)}, 80, 07 2017.

\bibitem[KDV17]{kocaoglu2017cost}
Murat Kocaoglu, Alex Dimakis, and Sriram Vishwanath.
\newblock Cost-optimal learning of causal graphs.
\newblock In {\em \ICML{2017}}, pages 1875--1884. JMLR. org, 2017.

\bibitem[KL19]{karnin2019discrepancy}
Zohar Karnin and Edo Liberty.
\newblock Discrepancy, coresets, and sketches in machine learning.
\newblock In {\em \COLT{2019}}, pages 1975--1993. PMLR, 2019.

\bibitem[KW21]{kazerouni2021best}
Abbas Kazerouni and Lawrence~M Wein.
\newblock Best arm identification in generalized linear bandits.
\newblock {\em Operations Research Letters}, 49(3):365--371, 2021.

\bibitem[LaL86]{lalonde1986evaluating}
Robert~J LaLonde.
\newblock Evaluating the econometric evaluations of training programs with
  experimental data.
\newblock {\em The American economic review}, pages 604--620, 1986.

\bibitem[LDR18]{li2018asymptotic}
Xinran Li, Peng Ding, and Donald~B Rubin.
\newblock Asymptotic theory of rerandomization in treatment--control
  experiments.
\newblock {\em Proceedings of the National Academy of Sciences},
  115(37):9157--9162, 2018.

\bibitem[Lin13]{lin2013agnostic}
Winston Lin.
\newblock Agnostic notes on regression adjustments to experimental data:
  Reexamining freedman’s critique.
\newblock {\em The Annals of Applied Statistics}, 7(1):295--318, 2013.

\bibitem[LSM{\etalchar{+}}17]{louizos2017causal}
Christos Louizos, Uri Shalit, Joris~M Mooij, David Sontag, Richard Zemel, and
  Max Welling.
\newblock Causal effect inference with deep latent-variable models.
\newblock {\em \NIPS{2017}}, 30, 2017.

\bibitem[M{\etalchar{+}}11]{mahoney2011randomized}
Michael~W Mahoney et~al.
\newblock Randomized algorithms for matrices and data.
\newblock {\em Foundations and Trends{\textregistered} in Machine Learning},
  3(2):123--224, 2011.

\bibitem[MMY14]{ma2014statistical}
Ping Ma, Michael Mahoney, and Bin Yu.
\newblock A statistical perspective on algorithmic leveraging.
\newblock In {\em \ICML{2014}}, pages 91--99. PMLR, 2014.

\bibitem[MR12]{morgan2012rerandomization}
Kari~Lock Morgan and Donald~B Rubin.
\newblock Rerandomization to improve covariate balance in experiments.
\newblock {\em The Annals of Statistics}, 40(2):1263--1282, 2012.

\bibitem[Ney23]{neyman}
Jerzy Neyman.
\newblock On the application of probability theory to agricultural experiments.
  essay on principles. section 9.
\newblock {\em Statistical Science}, 5(4):465--472, 1923.

\bibitem[NJC{\etalchar{+}}13]{nikolaev2013balance}
Alexander~G Nikolaev, Sheldon~H Jacobson, Wendy K~Tam Cho, Jason~J Sauppe, and
  Edward~C Sewell.
\newblock Balance optimization subset selection (boss): An alternative approach
  for causal inference with observational data.
\newblock {\em Operations Research}, 61(2):398--412, 2013.

\bibitem[NPS21]{nair2021budgeted}
Vineet Nair, Vishakha Patil, and Gaurav Sinha.
\newblock Budgeted and non-budgeted causal bandits.
\newblock In {\em \AISTATS{2021}}, pages 2017--2025. PMLR, 2021.

\bibitem[QWZ21]{qin2021budgeted}
Tian Qin, Tian-Zuo Wang, and Zhi-Hua Zhou.
\newblock Budgeted heterogeneous treatment effect estimation.
\newblock In {\em \ICML{2021}}, pages 8693--8702. PMLR, 2021.

\bibitem[Rub80]{rubin1980randomization}
Donald~B Rubin.
\newblock Comment: Randomization analysis of experimental data.
\newblock {\em Journal of the American Statistical Association}, 75(371):591,
  1980.

\bibitem[Sar06]{sarlos2006improved}
Tamas Sarlos.
\newblock Improved approximation algorithms for large matrices via random
  projections.
\newblock In {\em \FOCS{2006}}, pages 143--152. IEEE, 2006.

\bibitem[Set09]{settles2009active}
Burr Settles.
\newblock Active learning literature survey.
\newblock 2009.

\bibitem[SJS17]{shalit2017estimating}
Uri Shalit, Fredrik~D Johansson, and David Sontag.
\newblock Estimating individual treatment effect: generalization bounds and
  algorithms.
\newblock In {\em \ICML{2017}}, pages 3076--3085. PMLR, 2017.

\bibitem[SKDV15]{shanmugam2015learning}
Karthikeyan Shanmugam, Murat Kocaoglu, Alexandros~G Dimakis, and Sriram
  Vishwanath.
\newblock Learning causal graphs with small interventions.
\newblock In {\em \NIPS{2015}}, pages 3195--3203, 2015.

\bibitem[SSS{\etalchar{+}}19]{sundin2019active}
Iiris Sundin, Peter Schulam, Eero Siivola, Aki Vehtari, Suchi Saria, and Samuel
  Kaski.
\newblock Active learning for decision-making from imbalanced observational
  data.
\newblock In {\em \ICML{2019}}, pages 6046--6055. PMLR, 2019.

\bibitem[Stu10]{stuart2010matching}
Elizabeth~A Stuart.
\newblock Matching methods for causal inference: A review and a look forward.
\newblock {\em Statistical science: a review journal of the Institute of
  Mathematical Statistics}, 25(1):1, 2010.

\bibitem[UKBK13]{ugander2013graph}
Johan Ugander, Brian Karrer, Lars Backstrom, and Jon Kleinberg.
\newblock Graph cluster randomization: Network exposure to multiple universes.
\newblock In {\em \KDD{2013}}, pages 329--337, 2013.

\bibitem[WA18]{wager2018estimation}
Stefan Wager and Susan Athey.
\newblock Estimation and inference of heterogeneous treatment effects using
  random forests.
\newblock {\em Journal of the American Statistical Association},
  113(523):1228--1242, 2018.

\bibitem[Wag20]{wager2020stats}
Stefan Wager.
\newblock Stats 361: Causal inference.
\newblock 2020.

\bibitem[WDTT16]{wager2016high}
Stefan Wager, Wenfei Du, Jonathan Taylor, and Robert~J Tibshirani.
\newblock High-dimensional regression adjustments in randomized experiments.
\newblock {\em Proceedings of the National Academy of Sciences},
  113(45):12673--12678, 2016.

\bibitem[Woo14]{woodruff2014sketching}
David~P Woodruff.
\newblock Sketching as a tool for numerical linear algebra.
\newblock {\em Foundations and Trends{\textregistered} in Theoretical Computer
  Science}, 10(1--2):1--157, 2014.

\end{thebibliography}
\newpage

\appendix
\section{Appendix}

In section~\ref{app:ITE}, we present the missing proof details from section~\ref{sec:ite}; in section~\ref{app:ATE}, we present the missing proof details from section~\ref{sec:ate}; in section~\ref{app:experiments}, we present the missing details from section~\ref{sec:experiments}.

We will employ the following well-known result on the $\ell_2$-norm of a Gaussian vector.
\begin{fact}\label{lem:gaussian_norm}
Suppose $\pmb \zeta \in \mathbb{R}^n$ be a vector such that each co-ordinate $\pmb \zeta_i$ is drawn independently from the normal distribution $N(0, \sigma^2)$. Then, with probability $ \ge 1-1/n$:
\[ \norm{\pmb \zeta} \le 2 \sigma \cdot \sqrt{n}. \]
\end{fact}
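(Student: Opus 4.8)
The plan is to recognize this as a standard Gaussian concentration statement and reduce to a clean one-dimensional check. Write $\pmb\zeta = \sigma\,\mathbf{g}$ where $\mathbf{g}\in\R^n$ has i.i.d. standard Gaussian entries, so that $\norm{\pmb\zeta}=\sigma\norm{\mathbf{g}}$ and the claimed event $\norm{\pmb\zeta}\le 2\sigma\sqrt{n}$ is exactly $\norm{\mathbf{g}}\le 2\sqrt{n}$. Since $\E\bigl[\norm{\mathbf{g}}^2\bigr]=n$, the norm $\norm{\mathbf{g}}$ concentrates around $\sqrt{n}$, so a deviation up to $2\sqrt{n}$ should be an event of exponentially small probability; the task is only to make this quantitative with the right constant.

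The main tool I would use is Gaussian concentration for Lipschitz functions: the map $\mathbf{g}\mapsto\norm{\mathbf{g}}$ is $1$-Lipschitz, so $\Pr\bigl[\norm{\mathbf{g}}\ge \E\norm{\mathbf{g}} + u\bigr]\le e^{-u^2/2}$ for every $u\ge 0$. By Jensen's inequality, $\E\norm{\mathbf{g}}\le\sqrt{\E[\norm{\mathbf{g}}^2]}=\sqrt{n}$, hence $\Pr\bigl[\norm{\mathbf{g}}\ge \sqrt{n}+u\bigr]\le e^{-u^2/2}$. Choosing $u=\sqrt{n}$ gives
\[
\Pr\bigl[\norm{\pmb\zeta}\ge 2\sigma\sqrt{n}\bigr]=\Pr\bigl[\norm{\mathbf{g}}\ge 2\sqrt{n}\bigr]\le e^{-n/2}.
\]
It then remains to verify $e^{-n/2}\le 1/n$, i.e. $n\le e^{n/2}$, equivalently $2\ln n\le n$; this elementary inequality holds for all $n\ge 1$ (the function $n-2\ln n$ has its minimum at $n=2$, where it equals $2-2\ln 2>0$), which yields the stated failure probability $1/n$.

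There is essentially no hard step here — the result is standard and the only substantive content is pinning down that the chosen deviation matches the target probability. As an alternative route I could instead work directly with $\norm{\pmb\zeta}^2/\sigma^2$, which is a $\chi^2_n$ random variable, and apply the Laurent–Massart upper tail $\Pr[X\ge n+2\sqrt{nt}+2t]\le e^{-t}$ with $t=\ln n$; the one point to check there would be that $n+2\sqrt{n\ln n}+2\ln n\le 4n$ for all $n\ge 1$, a routine single-variable inequality. I would present the Lipschitz-concentration version as the primary argument since its constant verification is the cleanest.
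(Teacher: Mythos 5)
Your proof is correct. Note that the paper itself offers no proof of this statement: it is labeled a ``Fact'' and invoked as a well-known result, so there is no in-paper argument to compare against. Your primary route (the $1$-Lipschitz property of $\mathbf{g}\mapsto\norm{\mathbf{g}}$, Gaussian concentration around the mean, Jensen's bound $\E\norm{\mathbf{g}}\le\sqrt{n}$, and the elementary check $2\ln n\le n$ for all $n\ge 1$) is a standard and complete derivation, and in fact gives the much stronger failure probability $e^{-n/2}$ than the stated $1/n$; the alternative $\chi^2_n$ tail-bound route you sketch would also go through.
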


\subsection{Individual Treatment Effect Estimation}\label{app:ITE}

First, we argue that the error introduced by ignoring small singular values and using $\Xb^*$ in place of $\Xb$ is small. Using $\Xb^*$ instead of $\Xb$ introduces error that depends on the threshold $\gamma$ used in the construction of $\Xb^*$ (Def \ref{def:smoothed}).

\begin{claim}\label{cl:error_diff_app}
For every $\betab \in \mathbb{R}^d$,
$\norm{\Xb^* \betab - \Xb \betab} \le \sqrt{\gamma} \cdot \norm{\betab}$.
\end{claim}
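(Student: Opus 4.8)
The plan is to recognize $\Xb - \Xb^*$ as the ``tail'' of the singular value decomposition of $\Xb$ and to bound its spectral norm directly. Since $\norm{\Xb^*\betab - \Xb\betab} = \norm{(\Xb - \Xb^*)\betab}$, it suffices to control the operator norm $\norm{\Xb - \Xb^*}_2$ and then apply the submultiplicative inequality $\norm{(\Xb - \Xb^*)\betab} \le \norm{\Xb - \Xb^*}_2 \cdot \norm{\betab}$.

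First I would write $\Xb = \sum_i \sigma_i(\Xb)\, \Ub[:,i]\,\Vb[:,i]^T$ using the SVD, so that by Definition~\ref{def:smoothed} the smoothed matrix keeps exactly the terms whose index lies in $\Gamma^*$, namely $\Xb^* = \sum_{i \in \Gamma^*} \sigma_i(\Xb)\,\Ub[:,i]\,\Vb[:,i]^T$. Subtracting gives $\Xb - \Xb^* = \sum_{i \notin \Gamma^*}\sigma_i(\Xb)\,\Ub[:,i]\,\Vb[:,i]^T$. The key observation is that $\Xb$ and $\Xb^*$ share the same left and right singular vectors, so this expression is itself a valid (reduced) SVD of $\Xb - \Xb^*$: the retained families $\{\Ub[:,i]\}_{i \notin \Gamma^*}$ and $\{\Vb[:,i]\}_{i\notin \Gamma^*}$ are orthonormal, and the associated singular values are precisely $\{\sigma_i(\Xb) : i \notin \Gamma^*\}$.

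By the defining property of $\Gamma^*$, every index $i \notin \Gamma^*$ satisfies $\sigma_i(\Xb) < \sqrt{\gamma}$. Hence the largest singular value of $\Xb - \Xb^*$, which equals its spectral norm, obeys $\norm{\Xb - \Xb^*}_2 = \max_{i \notin \Gamma^*}\sigma_i(\Xb) \le \sqrt{\gamma}$. Combining this with the submultiplicative bound above yields $\norm{\Xb^*\betab - \Xb\betab} \le \sqrt{\gamma}\,\norm{\betab}$ for every $\betab \in \mathbb{R}^d$, as claimed.

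I do not anticipate a genuine obstacle here; the only point requiring care is the bookkeeping that $\Xb$ and $\Xb^*$ are simultaneously diagonalized by the same singular bases, so that discarding the low-singular-value directions leaves a matrix whose spectral norm is exactly the largest dropped singular value, which is strictly below $\sqrt{\gamma}$. Everything else is the standard operator-norm inequality.
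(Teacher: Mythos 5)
Your proof is correct and follows essentially the same route as the paper's: both reduce the claim to the operator-norm bound $\norm{(\Xb - \Xb^*)\betab} \le \norm{\Xb - \Xb^*}_2 \cdot \norm{\betab}$ and then observe that $\Xb - \Xb^*$ is the tail of the SVD, so its spectral norm is at most $\sqrt{\gamma}$. The only difference is that you spell out the ``by construction'' step (that the discarded singular triples form a valid SVD of the difference, whose top singular value is the largest dropped one), which the paper leaves implicit.
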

\begin{proof}
Using the singular value decomposition of $\Xb,\Xb^*$:
\begin{align*}
    \norm{\Xb^* \betab - \Xb \betab} &= \norm{\Ub^* \Sigmab^* \Vb^* \betab - \Ub \Sigmab \Vb \betab}\\
    &\le \norm{\Ub^* \Sigmab^* \Vb^*  - \Ub \Sigmab \Vb}_2 \cdot \norm{\betab},
\end{align*}
where  $\norm{\cdot}_2$  denotes the spectral norm (the largest singular value) of the matrix. By construction, $\norm{\Ub^* \Sigmab^* \Vb^*  - \Ub \Sigmab \Vb}_2 \le \sqrt{\gamma}$, giving the claim. 
\end{proof}

We next argue that the leverage scores of the smoothed matrix $\Xb^*$ are bounded by $1/\gamma$. 

\begin{claim}[Claim~\ref{cl:lev_ub} restated]\label{cl:lev_ub_app}
$\ell_j(\Xb^*) \le 1/\gamma$, for all $j \in [n]$.
\end{claim}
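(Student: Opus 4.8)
The plan is to express the leverage score as a quadratic form and control it using two facts guaranteed by the construction of $\Xb^*$: the nonzero singular values of $\Xb^*$ are at least $\sqrt{\gamma}$, while the rows of $\Xb^*$ are no longer than the corresponding rows of $\Xb$, which Assumption~\ref{assump:linearity} bounds by $1$.

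First I would unfold Definition~\ref{def:levscore} to write $\ell_j(\Xb^*) = \Xb^*[j,:]^T \pinv{(\Xb^{*T}\Xb^*)} \Xb^*[j,:]$, and plug in the SVD $\Xb^* = \Ub^* \Sigmab^* \Vb^{*T}$ with $\Ub^{*T}\Ub^* = \Vb^{*T}\Vb^* = \identity_{d'}$. This gives $\Xb^{*T}\Xb^* = \Vb^* (\Sigmab^*)^2 \Vb^{*T}$ and hence $\pinv{(\Xb^{*T}\Xb^*)} = \Vb^* (\Sigmab^*)^{-2} \Vb^{*T}$. Since every diagonal entry of $\Sigmab^*$ is at least $\sqrt{\gamma}$, we have $(\Sigmab^*)^{-2} \preceq \frac{1}{\gamma}\identity_{d'}$, and conjugating by $\Vb^*$ preserves this ordering, so $\pinv{(\Xb^{*T}\Xb^*)} \preceq \frac{1}{\gamma}\,\Vb^*\Vb^{*T} = \frac{1}{\gamma}\,\Pi$, where $\Pi \eqdef \Vb^*\Vb^{*T}$ is the orthogonal projection onto the row space of $\Xb^*$.

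Next, because each row $\Xb^*[j,:]$ lies in the row space of $\Xb^*$, it is fixed by $\Pi$, i.e. $\Pi\,\Xb^*[j,:] = \Xb^*[j,:]$. Substituting the operator inequality yields $\ell_j(\Xb^*) \le \frac{1}{\gamma}\,\Xb^*[j,:]^T \Pi\,\Xb^*[j,:] = \frac{1}{\gamma}\,\norm{\Xb^*[j,:]}^2$. It then remains only to show $\norm{\Xb^*[j,:]} \le 1$. For this I would observe that $\Xb^*$ is exactly $\Xb$ with each row projected onto the span of the columns of $\Vb^*$: using that the columns of $\Vb^*$ are a subset of the columns of $\Vb$, one computes $\Xb\,\Vb^*\Vb^{*T} = \Ub\Sigmab\Vb^T\Vb^*\Vb^{*T} = \Ub^*\Sigmab^*\Vb^{*T} = \Xb^*$, so $\Xb^*[j,:] = \Pi\,\Xb[j,:]$. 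Since $\Pi$ is an orthogonal projection, $\norm{\Xb^*[j,:]} = \norm{\Pi\,\Xb[j,:]} \le \norm{\Xb[j,:]} \le 1$ by the row-normalization of $\Xb$. Combining the two bounds gives $\ell_j(\Xb^*) \le 1/\gamma$.

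The main subtlety is that the $1/\gamma$ factor must come from the spectral lower bound $\sqrt{\gamma}$ on the singular values of $\Xb^*$ acting against a row norm that stays bounded by $1$; the familiar identity $\ell_j(\Xb^*) = \norm{\Ub^*[j,:]}^2$ only delivers the weaker bound $\ell_j(\Xb^*) \le 1$. The key step is therefore to keep the quadratic form written in terms of $\Xb^*[j,:]$ rather than reducing to $\Ub^*$, so that the pseudo-inverse bound $\pinv{(\Xb^{*T}\Xb^*)} \preceq \frac{1}{\gamma}\Pi$ and the projection bound $\norm{\Xb^*[j,:]} \le 1$ can both be applied at once.
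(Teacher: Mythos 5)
Your proof is correct and follows essentially the same route as the paper: both rest on the SVD of $\Xb^*$, the lower bound $\sqrt{\gamma}$ on its nonzero singular values, and the bound $\norm{\Xb^*[j,:]} \le \norm{\Xb[j,:]} \le 1$, with your operator inequality $\pinv{(\Xb^{*T}\Xb^*)} \preceq \frac{1}{\gamma}\,\Vb^*\Vb^{*T}$ being just a repackaging of the paper's computation. One correction to your closing remark: the identity $\ell_j(\Xb^*) = \norm{\Ub^*[j,:]}^2$ is not a dead end that only gives $\ell_j(\Xb^*) \le 1$ --- the paper uses exactly this identity, combined with $1 \ge \norm{\Xb^*[j,:]}^2 = \norm{\Sigmab^*\Ub^*[j,:]}^2 \ge \gamma\,\norm{\Ub^*[j,:]}^2$, to reach the same $1/\gamma$ bound.
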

\begin{proof}
It is well known that $\ell_j(\Xb^*) = \Xb^*[j, :]^T \pinv{(\Xb^{*T} \Xb^*)} \Xb[j,:] = \norm{\Ub^*[j,:]}^2$. This can be checked by writing $\Xb^*$ in its SVD. Further, $\norm{\Xb^*[j,:]}^2 \le \norm{\Xb[j,:]}^2$ and so, by assumption,
    $\norm{\Xb^*[j,:]}^2 = \norm{\Sigmab^* \Ub^*[j,:]}^2 \le 1.$
Since all diagonal entries of $\Sigmab^*$ are at least $\sqrt{\gamma}$, this gives, $\ell_j(\Xb^*) = \norm{\Ub^*[j,:]}^2 \le 1/\gamma$, completing the claim.
\end{proof}

The following claim about leverage scores is well known.
\begin{claim}[\cite{woodruff2014sketching}]\label{cl:sumlev}
$\sum_{j \in [n]} \ell_j(\Xb^*) = \textrm{rank}(\Xb^*)$.
\end{claim}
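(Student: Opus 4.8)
The plan is to reduce the sum of leverage scores to the squared Frobenius norm of the left singular factor $\Ub^*$, and then exploit the orthonormality of its columns. The key fact to leverage is the identity already recorded inside the proof of Claim~\ref{cl:lev_ub_app}, namely $\ell_j(\Xb^*) = \norm{\Ub^*[j,:]}^2$. This follows by substituting the SVD $\Xb^* = \Ub^* \Sigmab^* \Vb^{*T}$ into the definition of the leverage score (Defn.~\ref{def:levscore}) and simplifying, using that $\Vb^*$ has orthonormal columns and $\Sigmab^*$ is invertible (each diagonal entry is at least $\sqrt{\gamma}>0$), so that $\pinv{(\Xb^{*T}\Xb^*)} = \Vb^* \Sigmab^{*-2} \Vb^{*T}$ collapses the expression to a squared row norm of $\Ub^*$.

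Summing this identity over all rows gives
\[ \sum_{j \in [n]} \ell_j(\Xb^*) = \sum_{j \in [n]} \norm{\Ub^*[j,:]}^2 = \norm{\Ub^*}_F^2, \]
since the squared Frobenius norm is exactly the sum of squared row norms. I would then compute $\norm{\Ub^*}_F^2$ via the trace identity $\norm{\Ub^*}_F^2 = \tr(\Ub^{*T}\Ub^*)$. Because $\Ub^*$ consists of $d'$ columns of the orthogonal matrix $\Ub$, its columns are orthonormal, so $\Ub^{*T}\Ub^* = \identity_{d'}$ and hence $\tr(\Ub^{*T}\Ub^*) = d'$.

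Finally, I would identify $d'$ with $\rank(\Xb^*)$: by construction $\Sigmab^*$ collects exactly the $d'$ singular values that are at least $\sqrt{\gamma}>0$, so $\Xb^*$ has precisely $d'$ nonzero singular values, which equals its rank. Chaining the three steps yields $\sum_{j \in [n]} \ell_j(\Xb^*) = \rank(\Xb^*)$. There is no genuine obstacle here, as the statement is standard; the only point demanding minor care is the handling of the Moore–Penrose pseudo-inverse when $\Xb^*$ is rank-deficient as a map on $\R^d$ (i.e.\ $d' < d$). This is precisely why reducing to $\Ub^*$, whose orthonormal columns span the relevant subspace, is cleaner than manipulating $\pinv{(\Xb^{*T}\Xb^*)}$ directly, and it is the step I would write out most carefully.
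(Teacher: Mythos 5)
Your proof is correct, but note that the paper does not actually prove this claim: it is stated as a well-known fact with a citation to \cite{woodruff2014sketching}, so there is no in-paper argument to compare against. Your derivation is the standard one and is exactly what that reference supplies: the identity $\ell_j(\Xb^*)=\norm{\Ub^*[j,:]}^2$ (which the paper itself records and uses inside its proof of Claim~\ref{cl:lev_ub}), summing over rows to get $\norm{\Ub^*}_F^2=\tr(\Ub^{*T}\Ub^*)=d'$ by orthonormality of the columns of $\Ub^*$, and the observation that $d'$ equals the number of nonzero singular values of $\Xb^*$, i.e.\ its rank. One small caveat: the final identification $d'=\rank(\Xb^*)$, and the invertibility of $\Sigmab^*$ that you invoke to compute $\pinv{(\Xb^{*T}\Xb^*)}=\Vb^*\Sigmab^{*-2}\Vb^{*T}$, require the singular values indexed by $\Gamma^*$ to be strictly positive; Definition~\ref{def:smoothed} only assumes $\gamma\ge 0$, so in the degenerate case $\gamma=0$ the set $\Gamma^*$ could include zero singular values and you would need to restrict to the nonzero ones. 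Since every invocation in the paper sets $\gamma$ to a positive quantity (Claim~\ref{clm12}), this is a cosmetic point rather than a gap.
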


Combining Lemma~\ref{lem:linregression2} and Claim~\ref{clm12}, we get:
\begin{lemma}\label{lem:ite_regression_app}
Suppose  $\gamma = 4 c_0 \max\left\{\log (\mathrm{rank}(\Xb^*)),30/\epsilon \right\}$ and $\pib_j = \min \big\{1, \ell_j(\Xb^*)  \cdot c_0 \cdot [\log (\mathrm{rank}(\Xb^*)) + 30/\epsilon ] \big\}$, for some sufficiently large constant $c_0$. Then, Algorithm~\samplingite satisfies, for $i = 0, 1$, with probability at least $14/15$:
\[\norm{\Xb^* \widetilde{\betab}^i - \Yi} \le \left(1+\epsilon\right) \cdot \norm{\Xb^* \betab^i - \Yi}. \]
Further, $\mathbb{E}[|S^0 \cup S^1|] \le 2\sum_{j=1}^n \pib_j = O(d \log d + d/\epsilon)$.
\end{lemma}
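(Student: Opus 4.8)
The plan is to apply Lemma~\ref{lem:linregression2} separately to the two subsampled regression instances — one for the treatment outcomes ($i=0$) and one for the control outcomes ($i=1$) — and then combine the two failure events with a union bound. The first instance is immediate, while the second requires checking that the modified inclusion probabilities induced by the removal step still satisfy the hypothesis of Lemma~\ref{lem:linregression2}; this is exactly where Claim~\ref{clm12} is needed.

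First I would handle the treatment group. Here $S^0$ includes each $j$ independently with probability $\pib_j$, and $\Wb^0$ is built from precisely these probabilities. Since $\pib_j = \min\{1, \ell_j(\Xb^*) \cdot c_0 [\log(\rank(\Xb^*)) + 30/\epsilon]\}$ with $c_0 \ge 2c$, these probabilities dominate the threshold required by Lemma~\ref{lem:linregression2} taken with $\delta = 1/30$ (so that $1/(\delta\epsilon) = 30/\epsilon$). The lemma then yields, with probability at least $29/30$, the relative-error bound $\norm{\Xb^* \wt{\betab}^0 - \Yzero} \le (1+\epsilon)\norm{\Xb^*\betab^0 - \Yzero}$.

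The crux is the control group, where each $j$ survives into $S^1 \setminus S^0$ only with probability $\pib_j(1-\pib_j)$ and $\Wb^1$ is constructed from $\pib(1-\pib)$. To reapply Lemma~\ref{lem:linregression2} I need these effective probabilities to still dominate $\ell_j(\Xb^*) \cdot c \,[\log(\rank(\Xb^*)) + 30/\epsilon]$. This is where Claim~\ref{clm12} enters: it guarantees $\pib_j \le 1/2$, so $1-\pib_j \ge 1/2$ and hence $\pib_j(1-\pib_j) \ge \pib_j/2$. Moreover, since $\pib_j \le 1/2 < 1$ the minimum in the definition is never active, so $\pib_j = \ell_j(\Xb^*) \cdot c_0 [\log(\rank(\Xb^*)) + 30/\epsilon]$, which gives $\pib_j(1-\pib_j) \ge \tfrac{c_0}{2}\,\ell_j(\Xb^*)[\log(\rank(\Xb^*)) + 30/\epsilon] \ge c \cdot \ell_j(\Xb^*)[\log(\rank(\Xb^*)) + 30/\epsilon]$ using $c_0 \ge 2c$. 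This meets the hypothesis of Lemma~\ref{lem:linregression2} with $\delta = 1/30$, delivering the $(1+\epsilon)$ guarantee for $i=1$ with probability at least $29/30$. A union bound over the two failure events then gives both guarantees simultaneously with probability at least $1 - 2/30 = 14/15$.

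Finally, for the expected sample size I would write $\mathbb{E}[|S^0 \cup S^1|] \le \mathbb{E}[|S^0|] + \mathbb{E}[|S^1 \setminus S^0|]$. Each $j$ lies in $S^0$ with probability $\pib_j$ and in $S^1 \setminus S^0$ with probability $\pib_j(1-\pib_j) \le \pib_j$, so each expectation is at most $\sum_j \pib_j$, yielding $\mathbb{E}[|S^0 \cup S^1|] \le 2\sum_j \pib_j$. Bounding $\pib_j \le \ell_j(\Xb^*)\cdot c_0[\log(\rank(\Xb^*)) + 30/\epsilon]$, invoking Claim~\ref{cl:sumlev} (the sum of leverage scores equals the rank), and using $\rank(\Xb^*) \le d$ gives $\sum_j \pib_j = O(d\log d + d/\epsilon)$. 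The main subtlety is not any single calculation but the dependence bookkeeping introduced by the removal step: it turns the control-group inclusion probabilities into $\pib_j(1-\pib_j)$ rather than $\pib_j$, and the entire argument rests on Claim~\ref{clm12} keeping these within a factor of two so that a single constant $c_0 = 2c$ simultaneously certifies both regressions.
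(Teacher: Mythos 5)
Your proposal is correct and follows essentially the same route as the paper's proof: apply Lemma~\ref{lem:linregression2} to each regression instance with $\delta = 1/30$, use Claim~\ref{clm12} to get $\pib_j(1-\pib_j) \ge \pib_j/2$ so that the choice $c_0 \ge 2c$ covers the control group, union bound the two failure events, and bound the expected sample size via Claim~\ref{cl:sumlev} and $\rank(\Xb^*) \le d$. In fact, your write-up makes explicit the dependence bookkeeping for $S^1 \setminus S^0$ that the paper treats tersely.
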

\begin{proof}

From Lemma~\ref{lem:linregression2} and Claim~\ref{clm12}, we have:
\[\norm{\Xb^* \widetilde{\betab}^i - \Yi} \le \left(1+\eps\right) \cdot \norm{\Xb^* \betab^i - \Yi} \mbox{ for every $i = 0,1$.}\]

Using union bound, the total failure probability is $\le \frac{1}{30} + \frac{1}{30} \le \frac{1}{15}$.

From Algorithm~\ref{alg:ite}, let $S^0, S^1$ denote the set of people assigned to treatment and control respectively. From Lemma~\ref{lem:linregression2}, we have:
\begin{align*}
    \E[|S^0 \cup S^1|] \le 2\sum_{j=1}^n \pib_j &\le 2\sum_{j \in [n]}\ell_j(\Xb^*) \cdot  c_0 \cdot [\log (\mathrm{rank}(\Xb^*)) + {30}/{\epsilon} ]\\
    &\le 2c_0 d\cdot [\log d + {30}/{\epsilon} ] = O(d \log d + d/\epsilon)  \mbox{ (using Claim~\ref{cl:sumlev} and $\textrm{rank}(\Xb^*) \le d$).}
\end{align*}
\end{proof}

\begin{corollary}[Corollary~\ref{cor:error} restated]\label{cor:error_app}
Suppose  $\gamma = 4 c_0 \max\left\{\log (\mathrm{rank}(\Xb^*)),30/\epsilon \right\}$ and $\pib_j = \min \big\{1, \ell_j(\Xb^*)  \cdot c_0 \cdot [\log (\mathrm{rank}(\Xb^*)) + 30/\epsilon ] \big\}$, for some sufficiently large constant $c_0$. Then, Algorithm~\samplingite satisfies, for $i=0,1$, with probability at least $14/15$:
\[\norm{\Xb^* \widetilde{\betab}^i - \Yi} \le \left(1+ \epsilon\right) \cdot \left(\sqrt{\gamma}\norm{\betab^i} + \norm{\etai} \right) \mbox{ for $i = 0, 1$}.\]
\end{corollary}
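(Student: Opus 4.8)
The plan is to treat this corollary as a bridge between the relative-error regression guarantee of Lemma~\ref{lem:ite_regression_app} and the absolute error quantities $\sqrt{\gamma}\norm{\betab^i}$ and $\norm{\etai}$ that we actually want to appear in the final RMSE bound. The starting point is already handed to us: Lemma~\ref{lem:ite_regression_app} states that, under the stated choice of $\gamma$ and $\pib_j$, with probability at least $14/15$ we have $\norm{\Xb^* \widetilde{\betab}^i - \Yi} \le (1+\epsilon)\cdot \norm{\Xb^* \betab^i - \Yi}$ for each $i = 0,1$. So the entire task reduces to upper bounding the optimum-value proxy $\norm{\Xb^* \betab^i - \Yi}$, where $\betab^i$ is the \emph{true} coefficient vector (not the sampled minimizer), by $\sqrt{\gamma}\norm{\betab^i} + \norm{\etai}$.

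First I would invoke the linearity assumption (Assumption~\ref{assump:linearity}) to substitute $\Yi = \Xb \betab^i + \etai$, so that $\norm{\Xb^* \betab^i - \Yi} = \norm{\Xb^* \betab^i - \Xb \betab^i - \etai}$. Next I would apply the triangle inequality to split off the noise term, giving $\norm{\Xb^* \betab^i - \Xb \betab^i - \etai} \le \norm{\Xb^* \betab^i - \Xb \betab^i} + \norm{\etai}$. The first of these two summands is exactly the quantity controlled by Claim~\ref{cl:error_diff_app}, which yields $\norm{\Xb^* \betab^i - \Xb \betab^i} \le \sqrt{\gamma}\,\norm{\betab^i}$. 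Chaining these together gives $\norm{\Xb^* \betab^i - \Yi} \le \sqrt{\gamma}\,\norm{\betab^i} + \norm{\etai}$, and substituting this into the relative-error bound from Lemma~\ref{lem:ite_regression_app} produces the claimed inequality.

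On the probability accounting: the two events ``the regression guarantee holds for $i=0$'' and ``the regression guarantee holds for $i=1$'' are already jointly captured inside the $14/15$ success probability of Lemma~\ref{lem:ite_regression_app} (whose own proof absorbs the union bound over $i=0,1$), and Claim~\ref{cl:error_diff_app} holds deterministically for every $\betab$. Hence no additional union bound is needed, and the $14/15$ probability carries through unchanged to the corollary.

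I do not anticipate a genuine obstacle here, since every ingredient is already proved earlier in the excerpt and the argument is a single triangle-inequality split followed by two substitutions. The only point requiring a little care is keeping the roles of the vectors straight: the $\betab^i$ appearing in $\norm{\Xb^* \betab^i - \Yi}$ is the \emph{fixed} population coefficient, so Claim~\ref{cl:error_diff_app} applies to it directly, whereas the $\widetilde{\betab}^i$ on the left-hand side is the random subsampled minimizer whose error is governed only through the relative bound. Getting these two distinct objects to combine correctly is the sole subtlety, and it is resolved simply by applying each prior result to the vector it was stated for.
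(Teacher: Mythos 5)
Your proposal is correct and follows essentially the same route as the paper's proof: invoke Lemma~\ref{lem:ite_regression_app} for the relative-error bound, then bound $\norm{\Xb^* \betab^i - \Yi}$ via the triangle inequality and Claim~\ref{cl:error_diff_app}, with the $14/15$ probability carried over unchanged since the remaining steps are deterministic. The only (immaterial) difference is that you substitute $\Yi = \Xb\betab^i + \etai$ before splitting, whereas the paper splits off $\norm{\Xb\betab^i - \Yi}$ first and identifies it as $\norm{\etai}$ afterward.
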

\begin{proof}
\begin{align*}
 \norm{\Xb^* \widetilde{\betab}^i - \Yi} &\le (1+\epsilon) \cdot \norm{\Xb^* \betab^i - \Yi} \quad \mbox{(from Lemma~\ref{lem:ite_regression_app})}\\
 &\le (1+\epsilon) \cdot \left( \norm{\Xb^* \betab^i - \Xb \betab^i} + \norm{\Xb \betab^i - \Yi} \right) \quad \mbox{(using triangle inequality)}\\
 &\le (1+\epsilon) \cdot \left( \sqrt{\gamma} \norm{\betab^i} + \norm{\etai} \right) \quad \mbox{(from Claim~\ref{cl:error_diff_app})}
\end{align*}
\end{proof}

\begin{theorem}[Theorem~\ref{thm:ite} restated]\label{thm:ite_app}
Suppose $s \ge 120 c_0 d \log d$. There is a randomized algorithm that selects a subset $S \subseteq [n]$ of the population with $\E[|S|] \le s$, and, with probability at least $9/10$, returns ITE estimates $\hITE{j}$ for all $j \in [n]$ with error:
\begin{equation*}
\rmse = O\Bigl(\sqrt{\frac{1}{n}  \max\left\{{\frac{s}{d}}, \log d\right\}} \cdot (\norm{\betab^1} + \norm{\betab^0} ) +  \sigma \Bigl).
\end{equation*}
\end{theorem}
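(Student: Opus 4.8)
The plan is to treat the theorem as an assembly step: the heavy lifting (the regression guarantee and the leverage-score bound) already lives in Corollary~\ref{cor:error_app} and Lemma~\ref{lem:ite_regression_app}, so what remains is to split the RMSE into a treatment part and a control part, convert the Gaussian noise norms into $\sigma\sqrt n$ bounds, and substitute the prescribed $\epsilon = 120 c_0 d\log d/s$ into the threshold $\gamma$ to read off the stated coefficient. First I would start from the definition
\[
\rmse = \frac{1}{\sqrt n}\norm{(\Xb^*\widetilde{\betab}^1 - \Xb^*\widetilde{\betab}^0) - (\Yone - \Yzero)} \le \frac{1}{\sqrt n}\left(\norm{\Xb^*\widetilde{\betab}^1 - \Yone} + \norm{\Xb^*\widetilde{\betab}^0 - \Yzero}\right),
\]
and apply Corollary~\ref{cor:error_app} to each summand. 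Since that corollary is a joint statement over $i=0,1$ holding on a single event of probability $\ge 14/15$, I get $\norm{\Xb^*\widetilde{\betab}^i - \Yi} \le (1+\epsilon)(\sqrt\gamma\norm{\betab^i} + \norm{\etai})$ without any extra union bound here.

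Next I would discharge the noise terms: by Fact~\ref{lem:gaussian_norm}, each $\norm{\etai} \le 2\sigma\sqrt n$ with probability $\ge 1-1/n$. Substituting and collecting the two contributions yields
\[
\rmse \le (1+\epsilon)\left(\frac{\sqrt\gamma}{\sqrt n}\left(\norm{\betab^1} + \norm{\betab^0}\right) + 4\sigma\right).
\]
Because the hypothesis $s \ge 120 c_0 d\log d$ forces $\epsilon = 120 c_0 d\log d/s \le 1$, the prefactor $(1+\epsilon)$ is an absolute constant and can be absorbed into the $O(\cdot)$, leaving $\rmse = O\!\left(\tfrac{\sqrt\gamma}{\sqrt n}(\norm{\betab^1}+\norm{\betab^0}) + \sigma\right)$.

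The one genuinely quantitative step, and the place I would be most careful, is bounding $\gamma$. With the chosen $\epsilon$ we have $30/\epsilon = s/(4 c_0 d\log d)$, so
\[
\gamma = 4 c_0 \max\{\log(\rank(\Xb^*)),\, 30/\epsilon\} = \max\{4 c_0 \log(\rank(\Xb^*)),\, s/(d\log d)\}.
\]
Using $\rank(\Xb^*)\le d$ and $\log d \ge 1$ gives both $\log(\rank(\Xb^*)) \le \log d$ and $s/(d\log d) \le s/d$, hence $\gamma = O(\max\{\log d,\, s/d\})$ and therefore $\sqrt{\gamma/n} = O\!\big(\sqrt{\tfrac1n \max\{s/d,\log d\}}\big)$, which is exactly the coefficient in the statement. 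In parallel I would verify the sample-size claim: by Lemma~\ref{lem:ite_regression_app}, $\E[|S|] = \E[|S^0\cup S^1|] \le 2\sum_j \pib_j = O(d\log d + d/\epsilon)$, and the same substitution $d/\epsilon = s/(120 c_0\log d)$ together with $d\log d \le s/(120 c_0)$ shows this is at most $s$; indeed the constant $120$ was chosen precisely so this inequality holds. A final union bound over the $14/15$ regression event and the two $1/n$ Gaussian events gives overall success probability $\ge 1 - 1/15 - 2/n \ge 9/10$ for $n$ large enough.

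I expect the only real obstacle to be the $\gamma$ computation rather than any of the probabilistic machinery: one must notice that the $\log d$ sitting in the denominator of $\epsilon$ turns $30/\epsilon$ into $s/(d\log d)$, strictly smaller than $s/d$, so the advertised $\max\{s/d,\log d\}$ is a (slightly loose) upper bound rather than an equality. Everything else is bookkeeping of constants and failure probabilities, and the triangle-inequality split is what lets the single-regression guarantee of Corollary~\ref{cor:error_app} carry over to the ITE estimate without any interaction between the treatment and control subsamples.
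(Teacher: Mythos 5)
Your proposal is correct and follows essentially the same route as the paper's proof: the triangle-inequality split, an application of Corollary~\ref{cor:error_app}, the Gaussian norm bound $\norm{\etai}\le 2\sigma\sqrt{n}$, the substitution of $\epsilon = 120c_0 d\log d/s$ into $\gamma$ to obtain $\gamma = O(\max\{\log d, s/d\})$, the expected-sample-size check via Lemma~\ref{lem:ite_regression_app}, and the final union bound $1/15 + 2/n \le 1/10$. The only cosmetic difference is that you absorb $(1+\epsilon)\le 2$ explicitly and state the sample-size bound with $O(\cdot)$ notation where the paper tracks the constants $2c_0 d[\log d + 30/\epsilon] \le s$ exactly, but the substance is identical.
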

\begin{proof}
Let $\epsilon = \frac{120 c_0 d \log d}{s}$  and $\gamma= 4 c_0 \max\left\{\log d,30/\epsilon\right\}$.
Using triangle inequality, we have:
\begin{align*}
    \norm{(\Xb^* \widetilde{\betab}^1 - \Xb^* \widetilde{\betab}^0) - (\Yone - \Yzero) }_2 &\le \norm{\Xb^* \widetilde{\betab}^1 - \Yone}_2 + \norm{ \Xb^* \widetilde{\betab}^0 - \Yzero }_2 \\
    &\le (1+\epsilon) \cdot \left( \sqrt{\gamma} \norm{\betab^1} + \sqrt{\gamma} \norm{\betab^0} + \norm{\etazero} + \norm{\etaone} \right) \quad \mbox{(from Corollary~\ref{cor:error})}
\end{align*}

From the definition of \rmse, we get:
\begin{align*}
    \rmse &= \left(\frac{1}{n} \norm{(\Xb^* \widetilde{\betab}^1 - \Xb^* \widetilde{\betab}^0) - (\Yone - \Yzero) }^2\right)^{1/2}\\
    &\le \frac{1}{\sqrt{n}}\left[ 2\sqrt{\gamma} \cdot (\norm{\betab^1} + \norm{\betab^0} ) + 2 \cdot (\norm{\etazero} + \norm{\etaone})\right]\\
    &\le \frac{1}{\sqrt{n}}\left[ 2\sqrt{\gamma} \cdot (\norm{\betab^1} + \norm{\betab^0} ) + 8 \sigma \sqrt{n} \right] \quad \mbox{ (from Lemma~\ref{lem:gaussian_norm})}\\
    &\le  2 \sqrt{\frac{4c_0}{n} \max\left\{\log d,\frac{s}{c_0 d} \right\}} \cdot (\norm{\betab^1} + \norm{\betab^0} ) + 8 \sigma 
\end{align*}

Using union bound, the probability of failure is upper bounded by $\frac{1}{15} + \frac{2}{n} \le \frac{1}{10}$, for large $n$. 

From Algorithm~\ref{alg:ite}, let $S^0, S^1$ denote the set of people assigned to treatment and control respectively. From Lemma~\ref{lem:ite_regression_app}, we have:
\begin{align*}
    \E[|S^0 \cup S^1|] \le 2\sum_{j=1}^n \pib_j &\le 2\sum_{j \in [n]}\ell_j(\Xb^*) \cdot  c_0 \cdot [\log (\mathrm{rank}(\Xb^*)) + {30}/{\epsilon} ]\\
    &\le 2c_0 d\cdot [\log d + {30}/{\epsilon} ]  \mbox{ (using Claim~\ref{cl:sumlev} and $\textrm{rank}(\Xb^*) \le d$)}\\
    &\le 4 c_0 d \max\left\{ \log d,\frac{s}{4c_0 d\log d} \right\} = \max\left\{4c_0 d\log d, s \right\} \le s.
\end{align*}
Hence, the theorem.
\end{proof}

\subsection{Average Treatment Effect Estimation}\label{app:ATE}

\begin{lemma}[Lemma~\ref{lem:rec_gsw} restated]\label{lem:rec_gsw_app}
Suppose the Gram-Schmidt-Walk design~\cite{harshaw2019balancing} partitions the population $[n]$ into two disjoint groups $\Sb^+$ and $\Sb^-$. Under the linearity assumption, with probability $1-1/3\log(n/s)$, the following holds:
\small
\[\left|\sum_{i \in \Sb^+}2 \Yone_i - \sum_{i \in [n] } \Yone_i  \right| \le 4 \sqrt{\log (16 \log (n/s))} \cdot  \left(2\sigma \sqrt{n} + \norm{\betabone} \right) \]
\[\left|   \sum_{i \in \Sb^-}2 \Yzero_i - \sum_{i \in [n] } \Yzero_i  \right| \le 4 \sqrt{\log (16 \log (n/s))} \cdot  \left( 2\sigma \sqrt{n} + \norm{\betabzero} \right). \]
\normalsize
\end{lemma}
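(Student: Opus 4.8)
The plan is to reduce each one-sided claim to the two-sided Horvitz--Thompson guarantee of Lemma~\ref{lem:gsw} by a \emph{zeroing-out} trick. Consider the treatment bound first. The Gram--Schmidt--Walk design depends only on the covariates $\Xb$, not on the outcome values, so I may freely apply Lemma~\ref{lem:gsw} to a \emph{hypothetical} instance in which the treatment potential outcomes remain $\Yone$ but the control potential outcomes are replaced by the zero vector. For this instance the true effect is $\tau' = \frac1n\sum_{j\in[n]}\Yone_j$ and the Horvitz--Thompson estimator is $\hat\tau' = \frac2n\sum_{j\in\Sb^+}\Yone_j$, so that $n(\hat\tau'-\tau') = \sum_{j\in\Sb^+}2\Yone_j - \sum_{j\in[n]}\Yone_j$ is exactly the quantity to be bounded. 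Hence it suffices to apply Lemma~\ref{lem:gsw} to this instance and rescale by $n$.

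For this hypothetical instance the ridge quantity is $L' = \frac2n\min_{\betab}\bigl(\norm{\tfrac12\Yone - \Xb\betab}^2 + \norm{\betab}^2\bigr)$, since the average potential-outcome vector is now $\frac12(\Yone+\mathbf 0)=\frac12\Yone$. I would upper bound $L'$ by plugging in the feasible choice $\betab = \frac12\betabone$: under Assumption~\ref{assump:linearity}, $\Yone = \Xb\betabone + \etaone$, so the regression residual collapses to $\frac12\etaone$, giving $L' \le \frac1{2n}\bigl(\norm{\etaone}^2 + \norm{\betabone}^2\bigr)$. Invoking Fact~\ref{lem:gaussian_norm}, with probability at least $1-1/n$ we have $\norm{\etaone}\le 2\sigma\sqrt n$, hence $L' \le \frac1{2n}\bigl(4\sigma^2 n + \norm{\betabone}^2\bigr)$.

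Next I would instantiate Lemma~\ref{lem:gsw} with the deviation parameter $\Delta = \frac4n\sqrt{\log(16\log(n/s))}\,(2\sigma\sqrt n + \norm{\betabone})$, which makes $n\Delta$ equal to the claimed right-hand side. The crux is verifying that the exponent is large: substituting the bound on $L'$ and using $(2\sigma\sqrt n + \norm{\betabone})^2 \ge 4\sigma^2 n + \norm{\betabone}^2$ yields $\frac{\Delta^2 n}{8L'} \ge 4\log(16\log(n/s))$, so the failure probability $2\exp(-\Delta^2 n/(8L'))$ is at most $2(16\log(n/s))^{-4}$. The identical argument with the roles of treatment and control swapped (zeroing out $\Yone$ instead, and using $\betab=\frac12\betabzero$) handles the control inequality. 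A union bound over the two design events and the two Gaussian-norm events then leaves total failure probability at most $4(16\log(n/s))^{-4} + 2/n$, which is below $\frac1{3\log(n/s)}$ for $n$ large and $n/s$ bounded away from $1$.

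The main obstacle is the conceptual reduction in the first paragraph: recognizing that the scaled one-sided sums $\sum_{\Sb^+}2\Yone_j$ and $\sum_{\Sb^-}2\Yzero_j$ are precisely the Horvitz--Thompson estimators of the ``treatment-only'' and ``control-only'' instances, which lets Lemma~\ref{lem:gsw} apply verbatim with a single-sided average-outcome vector. Once this is in place, the remaining work---bounding $L'$ through one well-chosen $\betab$ and checking the exponent---is routine, with the only care needed being the arithmetic that keeps the $\sqrt{\log(16\log(n/s))}$ factor and the constant $4$ consistent with the target probability.
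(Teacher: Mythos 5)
Your proposal is correct and takes essentially the same approach as the paper: both proofs exploit that the GSW design depends only on $\Xb$ to apply Lemma~\ref{lem:gsw} to a hypothetical outcome instance, bound the ridge quantity $L$ by plugging in a multiple of $\betabone$, and finish with Fact~\ref{lem:gaussian_norm} and a union bound. The only difference is the choice of hypothetical instance—the paper sets $\Yzero := \Yone$ (so $\tau = 0$ and the one-sided deviation equals $\tfrac{n}{2}(\hat\tau - \tau)$), while you zero out the control outcomes (so it equals $n(\hat\tau' - \tau')$ directly)—a cosmetic variation that in fact yields a slightly larger exponent in the failure probability.
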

\begin{proof}
 The Gram-Schmidt-Walk design uses the covariate matrix $\Xb$ but not the treatment and control values $\Yone, \Yzero$, for constructing the partition of the population $\Sb^+, \Sb^-$. For the sake of analysis, consider the setting where $\Yone_i = \Yzero_i$ for all $i \in [n]$. Therefore, the average treatment effect, $\tau = 0$, and the estimator $\hat \tau$ satisfies:
 \small
\begin{align*}
    \hat \tau - \tau &= \frac{2}{n} \left( \sum_{i \in \Sb^+} \Yone_i - \sum_{i \in \Sb^- } \Yzero_i \right) = \frac{2}{n} \left( \sum_{i \in \Sb^+} \Yone_i - \sum_{i \in \Sb^- } \Yone_i \right)
\end{align*}
\normalsize

From Lemma~\ref{lem:gsw}, we have:
\begin{align*}
    L &= \frac{2}{n} \min_{\betab \in \mathbb{R}^d}\left(\norm{\frac{\Yone + \Yzero}{2} - \Xb \betab}^2 + \norm{\betab}^2 \right) \\
    &= \frac{2}{n}  \min_{\betab \in \mathbb{R}^d}\left(\norm{ \Yone  - \Xb \betab}^2 + \norm{\betab}^2 \right)\\
    &\le \frac{2}{n} \left(\norm{ \Yone  - \Xb \betabone}^2 + \norm{\betabone}^2 \right) = \frac{2}{n} \left( \norm{\etaone}^2 + \norm{\betabone}^2 \right).
\end{align*}

From Lemma~\ref{lem:gsw}, with probability at least $1-2/\log(16\log(n/s))$, we have:
\begin{align*}
    \left|\hat \tau - \tau\right| &= \left|\frac{2}{n} \left( \sum_{i \in \Sb^+} \Yone_i - \sum_{i \in \Sb^- } \Yone_i \right) \right| \\
    &\le   \sqrt{\frac{16 {\log (16 \log (n/s))} }{n^2} \left( \norm{\etaone}^2 + \norm{\betabone}^2 \right)} \\
    &\le \frac{4 \sqrt{\log (16 \log (n/s))}}{n} \cdot  \left(\norm{ \etaone} + \norm{\betabone} \right)
    \end{align*}
    
For simplicity, let :
\begin{align*}
\Deltaone &= 4 \sqrt{\log (16 \log (n/s))} \cdot  \left(\norm{ \etaone} + \norm{\betabone} \right) \\
    &\le 4 \sqrt{\log (16 \log (n/s))} \cdot  \left(2 \sigma \sqrt{n} + \norm{\betabone} \right), 
\end{align*}
where the last inequality follows from Fact~\ref{lem:gaussian_norm}, with probability at least $1-1/n$.    

\begin{align*}
    \sum_{i \in \Sb^- } \Yone_i &\ge  \sum_{i \in \Sb^+ } \Yone_i - \Deltaone \\
    \sum_{i \in \Sb^- } \Yone_i + \sum_{i \in \Sb^+ } \Yone_i &\ge \sum_{i \in \Sb^+ } \Yone_i+ \sum_{i \in \Sb^+ } \Yone_i - \Deltaone\\
    \sum_{i \in [n]} \Yone_i &\ge \sum_{i \in \Sb^+ } 2\Yone_i - \Deltaone\\
    \Rightarrow  2\sum_{i \in \Sb^+} \Yone_i - \sum_{i \in [n]} \Yone_i  &\le   \Deltaone.
\end{align*}
Similarly, we can argue that $\sum_{i \in [n]} \Yone_i - 2\sum_{i \in \Sb^+} \Yone_i  \le \Deltaone$. Using union bound, the inequality holds with probability at least $1-\frac{2}{16\log(n/s)}-\frac{1}{n} \ge 1-\frac{1}{6\log(n/s)}$.
Following the exact proof, we can obtain a similar bound for $\Yzero$ using the set $\Sb^-$. Hence, the lemma.
\end{proof}

\begin{theorem}[Theorem~\ref{thm:rgsw_ate} restated]
The estimator $\hat \tau_s$ in Algorithm~\rgsw obtains the following guarantee, with probability at least $2/3$:
\[ \left|\hat \tau_s - \tau \right| = O\left( \sqrt{\log \log (n/s)} \cdot \left(\frac{\sigma}{\sqrt{s}} +  \frac{\norm{\betabone} + \norm{\betabzero}}{s} \right)\right). \]
\end{theorem}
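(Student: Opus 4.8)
The plan is to reduce the task to two one-sided estimation problems and then telescope across the recursion levels. Write $A = \sum_{j\in[n]}\Yone_j$ and $B = \sum_{j\in[n]}\Yzero_j$, so that $\tau = (A-B)/n$ and $\hat\tau_s = \frac{2^k}{n}\big(\sum_{j\in\Zb^+_k}\Yone_j - \sum_{j\in\Zb^-_k}\Yzero_j\big)$, where $k$ denotes the level at which the loop breaks. By the triangle inequality it suffices to show that $2^k\sum_{j\in\Zb^+_k}\Yone_j$ approximates $A$ and $2^k\sum_{j\in\Zb^-_k}\Yzero_j$ approximates $B$, each up to additive error $O\big(\sqrt{\log\log(n/s)}\,(\sigma n/\sqrt s + \norm{\betab^i}\,n/s)\big)$; dividing the sum of these two errors by $n$ then gives the claimed bound. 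As a sanity check, in the idealized balanced/error-free case one has $T_{t+1}=T_t/2$ so that $2^k\sum_{j\in\Zb^+_k}\Yone_j = 2^k\cdot A/2^k = A$, confirming the scaling $2^k$ is the right one.

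For the treatment side, let $T_t := \sum_{j\in\Zb_t}\Yone_j$, so $T_1 = A$. The key observation I would exploit is that a single application of Lemma~\ref{lem:rec_gsw} to $\Zb_t$ controls \emph{both} halves at once: since $\sum_{j\in\Zb^+_t}\Yone_j + \sum_{j\in\Zb^-_t}\Yone_j = T_t$, the bound $|2\sum_{j\in\Zb^+_t}\Yone_j - T_t| \le \Deltaone_t$ immediately forces $|2\sum_{j\in\Zb^-_t}\Yone_j - T_t| \le \Deltaone_t$ as well. Hence, regardless of whether the algorithm recurses on $\Zb^+_t$ or $\Zb^-_t$, the chosen child $\Zb_{t+1}$ satisfies $|2T_{t+1} - T_t| \le \Deltaone_t$, where $\Deltaone_t = 4\sqrt{\log(16\log(n/s))}\,(\,\norm{\etaone|_{\Zb_t}} + \norm{\betabone}\,)$. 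Applying Lemma~\ref{lem:rec_gsw} once more at the final level gives $|2\sum_{j\in\Zb^+_k}\Yone_j - T_k|\le\Deltaone_k$, and I would combine these through the telescoping identity
\[ 2^k\sum_{j\in\Zb^+_k}\Yone_j - T_1 = 2^{k-1}\big(2\textstyle\sum_{j\in\Zb^+_k}\Yone_j - T_k\big) + \sum_{t=1}^{k-1}2^{t-1}\big(2T_{t+1}-T_t\big), \]
which yields $\big|2^k\sum_{j\in\Zb^+_k}\Yone_j - A\big| \le \sum_{t=1}^{k}2^{t-1}\Deltaone_t$. The control side is identical with $\Zb^-$ replacing $\Zb^+$ and $\betabzero$ replacing $\betabone$.

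It remains to bound this geometric sum. I would use that the populations shrink geometrically, $n_t \le n/2^{t-1}$ (we always recurse on the smaller half), and that the loop terminates with $2^{k-1} = O(n/s)$, since the penultimate level has $n_{k-1} > s$ and $n_{k-1}\le n/2^{k-2}$. Bounding the restricted noise by $\norm{\etaone|_{\Zb_t}} \le 2\sigma\sqrt{n_t}$ via Fact~\ref{lem:gaussian_norm} — legitimate because the GSW partitions depend only on $\Xb$, so $\Zb_t$ is independent of $\etaone$ — gives $2^{t-1}\Deltaone_t = O\big(\sqrt{\log\log(n/s)}\,(\sigma\sqrt{n\cdot 2^{t-1}} + 2^{t-1}\norm{\betabone})\big)$. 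The two resulting series sum to $\sum_t 2^{(t-1)/2} = O(\sqrt{2^{k-1}}) = O(\sqrt{n/s})$ and $\sum_t 2^{t-1} = O(2^{k-1}) = O(n/s)$, so the whole bound is $O\big(\sqrt{\log\log(n/s)}\,(\sigma n/\sqrt s + \norm{\betabone}\,n/s)\big)$, which matches the target after dividing by $n$.

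Finally, for the probability: there are $k = O(\log(n/s))$ levels, and each level's treatment and control estimates hold with failure probability $O(1/\log(n/s))$, so a union bound over all levels keeps the total failure probability constant, and tuning constants so that each level fails with probability at most $\frac{1}{3\log(n/s)}$ against at most $\log_2(n/s)$ levels pushes it below $1/3$. The step I expect to be most delicate is this probability bookkeeping at the deepest levels: Lemma~\ref{lem:rec_gsw} as stated charges a Gaussian-norm failure of $1/n_t$, which degrades when $n_t$ is small near the bottom of the recursion. I would handle this by replacing the crude event $\norm{\etaone|_{\Zb_t}}\le 2\sigma\sqrt{n_t}$ with a standard $\chi^2$ tail bound that fails with probability only $O(1/\log(n/s))$ uniformly in the level, at the cost of an absorbed lower-order additive $\sigma\sqrt{\log\log(n/s)}$ term in $\Deltaone_t$, so that the union bound goes through cleanly. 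The telescoping identity together with the two-sided control of the halves is the conceptual heart of the argument; the rest is geometric summation and concentration bookkeeping.
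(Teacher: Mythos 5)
Your proposal follows essentially the same route as the paper's proof: split $n\left|\hat\tau_s - \tau\right|$ into a treatment term and a control term, telescope across the recursion levels applying Lemma~\ref{lem:rec_gsw} once per level (your telescoping identity is just the closed form of the paper's ``add and subtract $\sum_{j\in\Zb_k}\Yone_j$ and repeat $k$ times''), then finish with the same geometric sums $\sum_t 2^{(t-1)/2}=O(\sqrt{n/s})$, $\sum_t 2^{t-1}=O(n/s)$ and a union bound over the $O(\log(n/s))$ levels. If anything, you are more careful than the paper on two points it glosses over: the observation that the lemma's bound for one half automatically controls the other half (needed because the algorithm recurses on whichever side is smaller, so the $\Yone$-sum must be controlled on $\Zb^-_t$ as well as $\Zb^+_t$), and the $1/n_t$ failure probability of the Gaussian-norm bound at deep recursion levels, which your uniform $\chi^2$-tail fix handles cleanly.
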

\begin{proof}
Suppose the Algorithm~\rgsw gets terminated after $k \le \lceil \log (n/s) \rceil$ recursive calls to \gsw. Therefore, in the estimator $\hat \tau_s$, we scale it using $2^k$.  For simplicity of notation, we use $\Sb^+, \Sb^-$ to denote the sets $\Zb^+_k$ and $\Zb^-_k$ respectively. Using Lemma~\ref{lem:rec_gsw} , we show that the scaled contribution of treatment values, i.e., $\sum_{j \in \Sb^+} 2^k \cdot \Yone_j$ is close to the contribution on the entire population, i.e., $\sum_{j \in [n]} \Yone_j$. As this holds for both the control and treatment groups, our final estimate $\hat \tau_s$ has low error. We have:

\begin{align*}
    \hat \tau_s - \tau &= \frac{2^k}{n} \left( \sum_{j \in \Sb^+} \Yone_j - \sum_{j \in \Sb^-} \Yzero_j \right) - \frac{1}{n} \left(\sum_{i \in [n]} \Yone_i - \sum_{i \in [n]} \Yzero_i \right)\\
   n \left|  \hat \tau_s - \tau \right| &\le   \left| \sum_{j \in \Sb^+} 2^k \cdot \Yone_j - \sum_{i \in [n]} \Yone_i  \right| + \left| \sum_{j \in \Sb^-} 2^k \cdot \Yzero_j - \sum_{i \in [n]} \Yzero_i  \right|
\end{align*}
Consider the first term to which  we add and subtract $\sum_{j \in \Sb^+ \cup \Sb^-} \Yone_j.$  This gives us:
\begin{align*}
   \left|\sum_{j \in \Sb^+} 2^k \cdot \Yone_j - \sum_{i \in [n]} \Yone_i  \right| &\le \left|{2^{k-1}} \left( \sum_{j \in \Sb^+} 2 \cdot \Yone_j - \sum_{j \in \Zb_k} \Yone_j \right) \right| + \left|  \sum_{j \in \Zb_k} 2^{k-1} \cdot \Yone_j -  \sum_{i \in [n]} \Yone_i \right|\\
   &\le \left| {2^{k-1} \cdot 4 \sqrt{\log (16 \log (n/s))}} \left(2{\sigma \sqrt{|\Zb_k|}} + \norm{\betabone} \right)\right| + \left|  \sum_{j \in \Zb_k} 2^{k-1} \cdot \Yone_j -  \sum_{i \in [n]} \Yone_i \right|,
\end{align*}
where the last step follows from Lemma~\ref{lem:rec_gsw}. Repeating this $k$ times gives us:
\begin{align*}
    \begin{split}
    \left|\sum_{j \in \Sb^+} 2^k \cdot \Yone_j - \sum_{i \in [n]} \Yone_i  \right|  &\le 
        4 \sqrt{\log (16 \log (n/s))} \cdot 2^k \cdot \\
        & \left[\left( \frac{\sqrt{|\Zb_k|}}{1}  \frac{\sqrt{|\Zb_{k-1}|}}{2} + \cdots + \frac{\sqrt{|\Zb_1|}}{2^k}  \right) \sigma  + \left( \frac{1}{2} + \cdots + \frac{1}{2^k}\right) \norm{\betabone} \right]
    \end{split}
    \\
    &\le 4 \sqrt{\log (16 \log (n/s))} \cdot \frac{n}{s} \cdot \left[\left( \frac{\sqrt{s}}{1} + \frac{\sqrt{2s}}{2} + \cdots + \frac{s \sqrt{n}}{n}  \right) \sigma + \norm{\betabone} \right]\\
    \left|\frac{1}{n} \left(\sum_{j \in \Sb^+} 2^k \cdot \Yone_j - \sum_{i \in [n]} \Yone_i \right)  \right| &\le 4 \sqrt{\log (16 \log (n/s))}  \cdot \left[\left( \frac{1}{\sqrt{s}} + \frac{1}{\sqrt{2s}} + \frac{1}{\sqrt{4s}} \cdots + \frac{1}{\sqrt{n}}  \right) \sigma + \frac{\norm{\betabone}}{s} \right]\\
    &\le 4 \sqrt{\log (16 \log (n/s))}  \cdot \left( \frac{4\sigma}{\sqrt{s}} + \frac{\norm{\betabone}}{s}\right).
\end{align*}

Similarly, we can show that:
\[\left|\frac{1}{n} \left(\sum_{j \in \Sb^-} 2^k \cdot \Yzero_j - \sum_{i \in [n]} \Yzero_i \right)  \right| \le 4 \sqrt{\log (16 \log (n/s))}  \cdot \left( \frac{4\sigma}{\sqrt{s}} + \frac{\norm{\betabzero}}{s}\right).\]

Using union bound, the total failure probability is upper bounded  by 
$\frac{1}{3\log(n/s)} \cdot \log (n/s) \le \frac{1}{3}$. Hence, the theorem.
\end{proof}

\subsection{Experimental Evaluation}\label{app:experiments}

\paragraph{Data Generation.} We evaluate our approaches on five datasets:
\begin{enumerate}
    \item[(i)] \textit{IHDP}. This contains data regarding the cognitive development of children, and consists of $747$ samples with $25$ covariates describing properties of the children and their mothers, and whose outcome values are simulated~\cite{hill2011bayesian, dorie2016npci}.
    \item[(ii)] \textit{Twins}. This contains data regarding the mortality rate in twin births in USA between 1989-1991~\cite{almond2005costs}. Following the work of~\cite{louizos2017causal}, we select twins belonging to same sex, with weight less than 2kg, resulting in about 11984 twins (pairs), each with 48 covariates. The purpose of the experiment is to evaluate the effect of weight (treatment) on mortality (outcome). We use the binary value corresponding to the mortality value as the treatment outcome. As we have a pair of outcome values for every twin pair, we use them as potential outcomes.
    \item[(iii)] \textit{LaLonde}. This contains data regarding the effectiveness of a job training program on the real earnings of an individual after completion of the program~\cite{lalonde1986evaluating}, which is also the outcome value. The corresponding covariate matrix contains $445$ rows and $10$ covariates per row.
    \item[(iv)] \textit{Boston}. This is constructed based on the housing prices in the Boston area~\cite{harrison1978hedonic}. The treatment variable was air pollution and the outcome value recorded for each sample is the median house price. The corresponding covariate matrix contains $506$ rows and $12$ covariates per row. 
    \item[(v)] \textit{Synthetic}. In Figure~\ref{fig:lev_scores_app}, we observe a high disparity in the leverage score values (and the spectrum) of the covariate matrix in the real datasets. In order to generate fully synthetic dataset that shows a similar pattern, we used an approach due to~\cite{ma2014statistical}. In particular, we used their third dataset configuration, i.e., $\Xb \in \mathbb{R}^d$ is generated from multi-variate $t$-distribution with $1$ degree of freedom and the covariance matrix is $\Sigma \in \mathbb{R}^{d \times d}$ where $\Sigma_{ij} = 2 \cdot (0.5)^{|i-j|}$. For both the potential outcomes, we use a random linear function and add Gaussian noise. E.g., for the control outcome $\Yzero = \Xb \betabzero + \etazero$, we generate $\betabzero \in \mathbb{R}^d$ by drawing each co-ordinate uniformly from $[0, 1]$ and normalize it to make it a unit vector. The Gaussian noise is generated from $N(0, c \cdot I_{n \times n})$, where we vary $c$ in the range of $[1/n^{0.5}, 1/d^{0.5}]$ to ensure that the contribution of the noise term to the $\ell_2$-norm is very small. 
\end{enumerate}

\paragraph{Setup.} We used a personal Apple Macbook Pro laptop with 16GB RAM and Intel i5 processor for conducting all our experiments. It took less than an hour to complete each experiment on each dataset. We used publicly available code for the implementation of GSW algorithm~\cite{harshaw2019balancing}.

\captionsetup{font=small}

\begin{figure}
   \begin{minipage}{0.45\columnwidth}
\begin{figure}[H]
    \subfigure[\small{IHDP}]{\includegraphics[scale=0.45]{./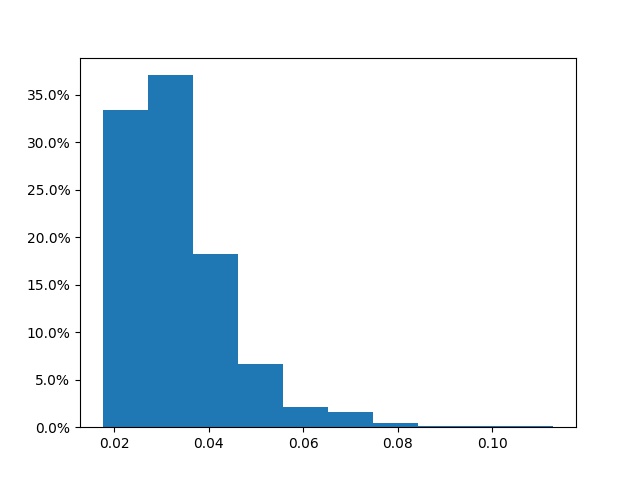}}
    \subfigure[\small{Twins}]{\includegraphics[scale=0.45]{./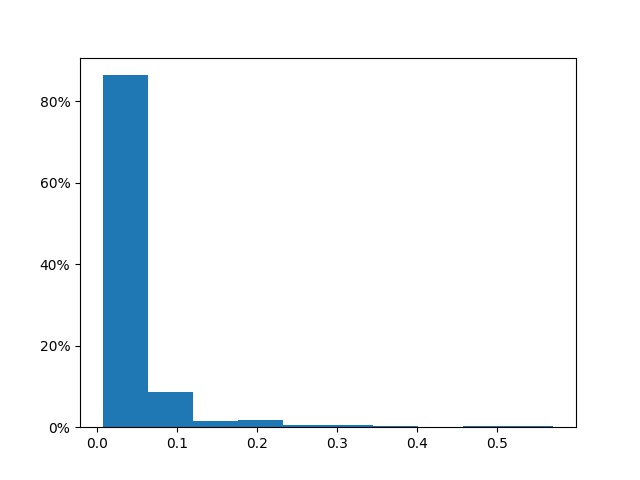}}
     \subfigure[\small{Lalonde}]{\includegraphics[scale=0.45]{./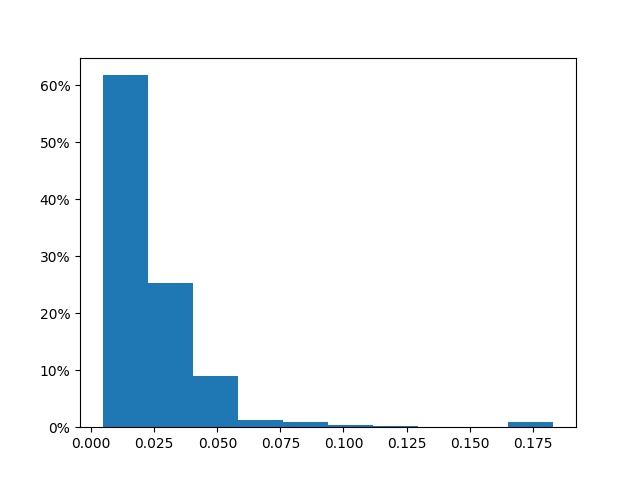}}
\end{figure}
\end{minipage}
\hspace*{2ex}
\begin{minipage}{0.48\columnwidth}
\begin{figure}[H]
     \subfigure[\small{Boston}]{\includegraphics[scale=0.45]{./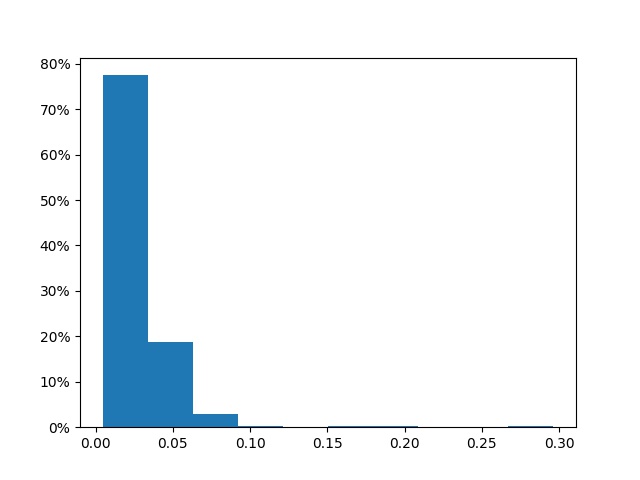}}
    \subfigure[\small{Synthetic}]{\includegraphics[scale=0.45]{./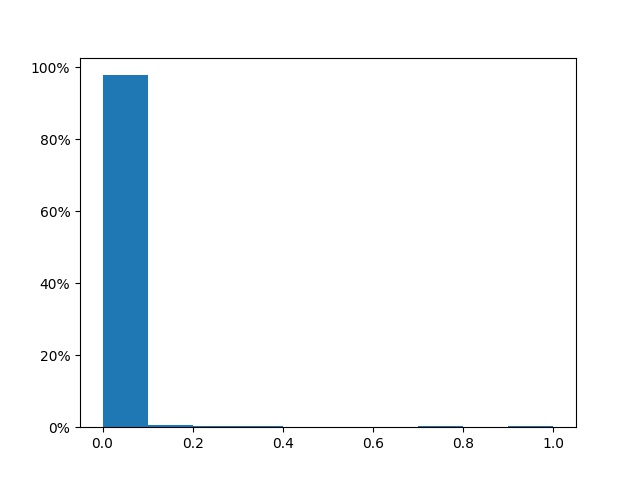}}
\end{figure}
\end{minipage}
\caption{We plot the histogram of leverage scores of the covariate matrices for each of the datasets. On $y$-axis, we measure the percentage of the dataset corresponding to a particular leverage score (on the $x$-axis).}
    \label{fig:lev_scores_app}
\end{figure}

\begin{figure}
   \begin{minipage}{0.45\columnwidth}
\begin{figure}[H]
    \subfigure[\small{IHDP}]{\includegraphics[scale=0.45]{./plots/ihdp_ATE_error_bar_final.jpeg}}
    \subfigure[\small{Twins}]{\includegraphics[scale=0.45]{./plots/twins_ATE_error_bar_final.jpeg}}
     \subfigure[\small{Lalonde}]{\includegraphics[scale=0.45]{./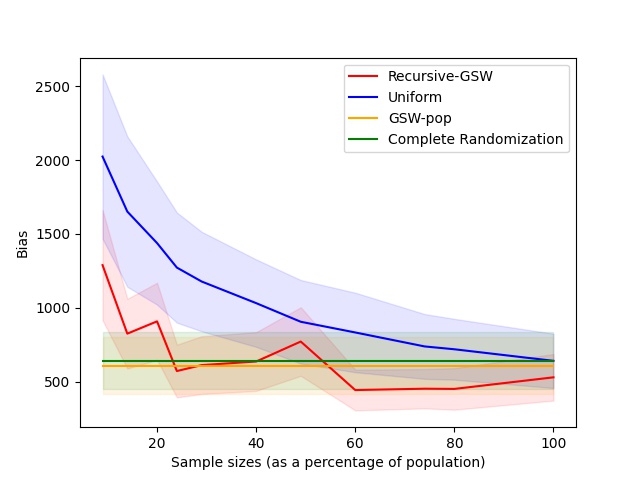}}
\end{figure}
\end{minipage}
\hspace*{2ex}
\begin{minipage}{0.48\columnwidth}
\begin{figure}[H]
     \subfigure[\small{Boston}]{\includegraphics[scale=0.45]{./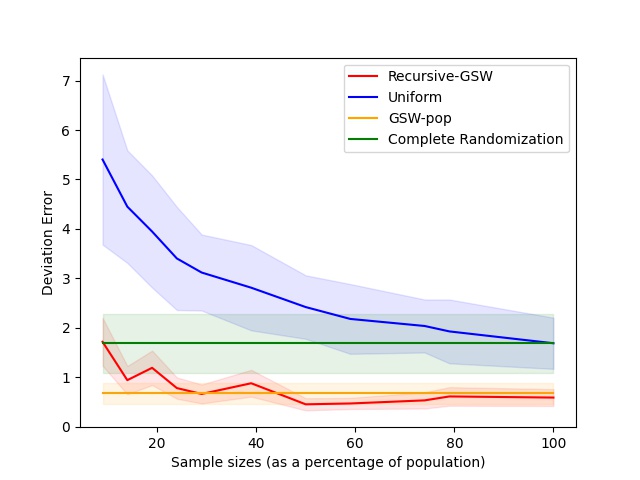}}
    \subfigure[\small{Synthetic}]{\includegraphics[scale=0.45]{./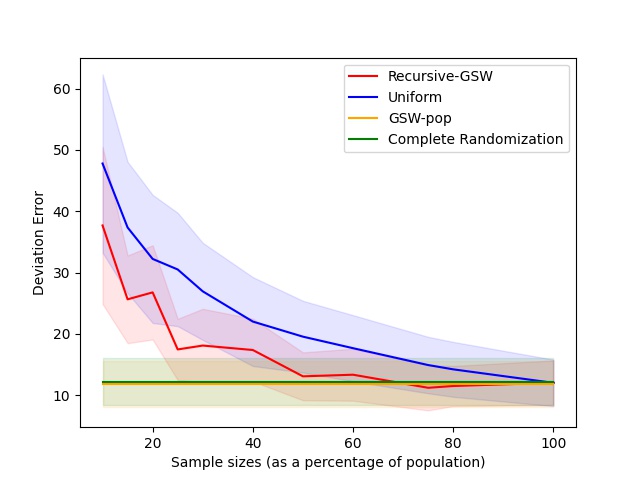}}
\end{figure}
\end{minipage}
\caption{We compare the performance of various methods for estimating ATE, measured using deviation error on $y$-axis, against different sample sizes (as proportion of dataset size) on $x$-axis.}
    \label{fig:ATE_app}
\end{figure}
\begin{figure}
\begin{minipage}{0.45\columnwidth}
\begin{figure}[H]
    \subfigure[\small{IHDP}]{\includegraphics[scale=0.45]{./plots/ihdp_ITE_error_bar_18.jpeg}}
    \subfigure[\small{Twins}]{\includegraphics[scale=0.45]{./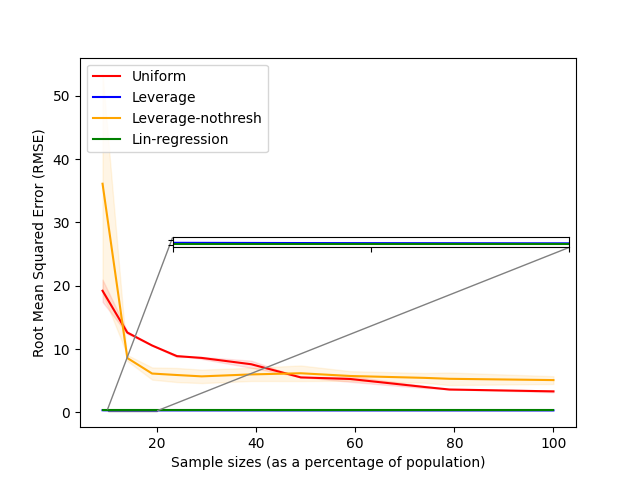}}
    \subfigure[\small{Lalonde}]{\includegraphics[scale=0.45]{./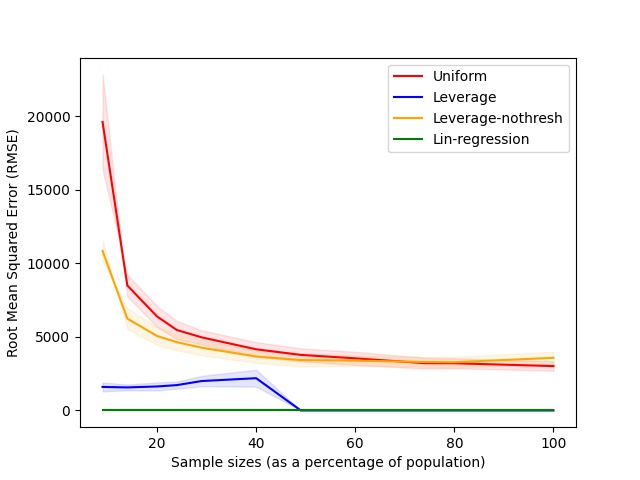}}
    
\end{figure}
\end{minipage}
\hspace*{2ex}
\begin{minipage}{0.48\columnwidth}
\begin{figure}[H]
    \subfigure[\small{Boston}]{\includegraphics[scale=0.45]{./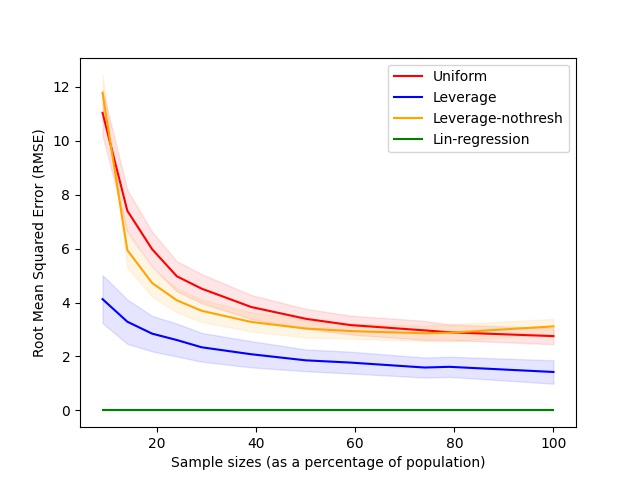}}
    \subfigure[\small{Synthetic}]{\includegraphics[scale=0.45]{./plots/synthetic_baseline_zoomin.png}}
\end{figure}
\end{minipage}
\caption{ We compare the performance of various methods for estimating ITE, measured using \rmse on $y$-axis, against different sample sizes (as proportion of dataset size) on $x$-axis.}
\label{fig:ITE_app}
\end{figure}

\paragraph{Results.} For every dataset, we run each experiment for 1000 trials and plot the mean using a colored line. Also, we shade the region between $30$ and $70$ percentile around the mean to signify the \textit{confidence interval} as shown in Figures~\ref{fig:ATE_app}, ~\ref{fig:ITE_app} representing ATE and ITE results respectively.  
\begin{enumerate}
    \item \textbf{ATE}. For all datasets, we observe that the deviation error obtained by our algorithm labeled as \textit{Recursive-GSW} in Figure~\ref{fig:ATE_app}, is significantly smaller than that of \emph{Uniform} baseline. Surprisingly, for the IHDP dataset, our approach is significantly better than \emph{Complete-randomization}, for all sample sizes, including using just $10\%$ of data. For almost all the remaining datasets using a sample of size $30\%$, we achieve the same bias (up to the {confidence interval}) as that of \emph{Complete-randomization}. For the Boston dataset, our approach is better than \emph{Complete-randomization}, for all sample sizes. Complete randomization is one of the most commonly used methods for experimental design and our results indicate a substantial reduction in experimental costs. For {IHDP} dataset, a sample size of about $10\%$ of the population is sufficient to achieve a similar bias as that of \emph{GSW-pop}. For the remaining datasets, we observe that for sample sizes of about $30\%$ of the population, the deviation error obtained by our algorithm is within the shaded {confidence interval} of the bias obtained by \emph{GSW-pop}. Therefore, for a specified error tolerance level for ATE, we can reduce the associated experimental costs using just a small subset of the dataset using our algorithm.
    \item \textbf{ITE}. For all sample sizes, we observe that the \rmse obtained by our algorithm labeled as \emph{Leverage} in Figure~\ref{fig:ITE_app}, is significantly smaller than that of all the other baselines, including \emph{Uniform} and \emph{Leverage-nothresh}. E.g., we observe that when the sample size is $20\%$ of the population in {IHDP} dataset, the error obtained by \emph{Leverage} is at least $50\%$ times smaller than that of \emph{Uniform} and \emph{Leverage-nothresh}. For the Synthetic and Twins datasets, the error obtained by \emph{Leverage} is extremely close to that of the error due to the best linear fit, \emph{Lin-regression} (see the zoomed in part of the figure). Similar to ATE results, our algorithms result in a reduction of experimental costs for ITE estimation using only a fraction of the dataset.
\end{enumerate}

\end{document}